\relax
\documentclass[letterpaper]{article} 

\usepackage{aaai22}  
\usepackage{caption} 
\DeclareCaptionStyle{ruled}{labelfont=normalfont,labelsep=colon,strut=off} 
\frenchspacing  
\setlength{\pdfpagewidth}{8.5in}  
\setlength{\pdfpageheight}{11in}  
\usepackage{courier}  
\usepackage{graphicx} 
\usepackage{helvet}  
\usepackage{natbib}  
\usepackage{times}  
\usepackage[hyphens]{url}  
\urlstyle{rm} 

\usepackage{algorithm}
\usepackage[noend]{algorithmic}
\usepackage{amsmath,amssymb}
\usepackage{amsthm}
\usepackage{bbm}
\usepackage{booktabs}
\usepackage{comment}
\usepackage{enumitem}
\usepackage[utf8]{inputenc}
\usepackage{makeidx}  
\usepackage{multirow}
\usepackage{multicol}
\usepackage{stmaryrd}
\usepackage{subcaption}
\usepackage{svg}
\usepackage{tikz}
\usepackage{xcolor}
\usepackage{xspace}

\usepackage[graphicx]{realboxes}
\usepackage{tikz}
\tikzset{elliptic state/.style={draw,ellipse}}
\usetikzlibrary{shapes,shapes.geometric,arrows,fit,calc,positioning,automata,}
\usetikzlibrary{automata,positioning}
\usepackage{tikz-qtree}
\usepackage{float}
\usepackage{floatflt}

\pdfinfo{
/Title (Synthesis from Satisficing and Temporal Goals)
/Author (Suguman Bansal, Lydia Kavraki, Moshe Y. Vardi, Andrew Wells)
/TemplateVersion (2022.1)
}
\setcounter{secnumdepth}{2} 

\title{Synthesis from Satisficing and Temporal Goals}

\author {
    Suguman Bansal,\textsuperscript{\rm 1}
    Lydia Kavraki, \textsuperscript{\rm 2}
    Moshe Y. Vardi, \textsuperscript{\rm 2}
    Andrew Wells \textsuperscript{\rm 2,3}
 
}
\affiliations {
    \textsuperscript{\rm 1} University of Pennsylvania\\
    \textsuperscript{\rm 2} Rice University\\
    \textsuperscript{\rm 3} Tesla\\
}

\newcommand{\ltl}{\mathsf{LTL}}
\newcommand{\ltlU}{\mathsf{U}}
\newcommand{\ltlX}{\mathsf{X}}
\newcommand{\ltlNeg}{\neg}
\newcommand{\ltlG}{\mathsf{G}}
\newcommand{\ltlF}{\mathsf{F}}



\newcommand*{\Q}{\mathbb{Q}}
\newcommand*{\Z}{\mathbb{Z}}
\renewcommand*{\L}{\mathcal{L}}
\renewcommand*{\O}{\mathcal{O}}

\renewcommand*{\Re}{\mathbb{R}}

\newcommand{\F}{\mathcal{F}}
\newcommand{\ap}{\mathcal{AP}}
\newcommand{\G}{\mathcal{G}}
\newcommand{\GA}{\mathsf{GA}}


\newcommand*{\A}{\mathcal{A}}
\newcommand*{\Statess}{\mathit{S}}
\newcommand*{\State}{S}
\newcommand*{\Start}{s_I}
\newcommand*{\Final}{\mathcal{F}}

\newcommand*{\init}{v_\mathsf{init}}


\newcommand{\gap}[1]{\mathsf{gap}(#1,d)}
\newcommand{\thresh}{\mathsf{T}}

\newcommand*{\DSum}[2]{\mathit{DS}({#1}, {#2})}

\newcommand*{\R}{\mathsf{R}}

\newcommand{\roundL}[1]{\mathsf{roundLow}(#1,k,p)}

\newcommand{\gapL}[1]{\mathsf{gapLow}(#1,k,p)}

\newcommand{\DSumL}[1]{\mathsf{DSLow}(#1,k,p)}


\newtheorem{theorem}{Theorem}

\newtheorem{lemma}{Lemma}
\newtheorem{definition}{Definition}
\newtheorem*{problem}{Problem}

\begin{document}

\maketitle

\begin{abstract}

Reactive synthesis from  high-level specifications that combine {\em hard} constraints expressed in Linear Temporal Logic ($\ltl$) with {\em soft} constraints expressed by discounted-sum (DS) rewards has applications in planning and reinforcement learning. An existing approach combines techniques from $\ltl$ synthesis with optimization for the DS rewards but has failed to yield a sound algorithm. An alternative approach combining $\ltl$ synthesis with satisficing DS rewards (rewards that achieve a threshold) is sound and complete for integer discount factors, but, in practice, a fractional discount factor is desired. This work extends the existing satisficing approach, presenting the first sound algorithm for synthesis from $\ltl$ and DS rewards with fractional discount factors. The utility of our algorithm is demonstrated on robotic planning domains.

\end{abstract}

\section{Introduction}

Reactive synthesis is the automated construction, from a
high-level description of its desired behavior, of a reactive system that continuously interacts with an uncontrollable external environment~\cite{church1957applications}. 

Recent applications of reactive synthesis have emerged in AI for planning and robotics tasks~\cite{camacho2019towards,he2019efficient,kress2018synthesis}. 
These applications can be formulated as a deterministic turn-based interaction between a controllable system player and an uncontrollable environment player. Given a specification, the synthesis task is to generate a system strategy such that all resulting interactions with the environment satisfy the specification. A large focus in this line of work has been on synthesis from Linear Temporal Logic ($\ltl$) specifications~\cite{pnueli1977temporal,pnueli1989synthesis}.

Yet, several desired specifications either cannot be expressed using $\ltl$ or doing is cumbersome. Examples include specifications about the quantitative properties of systems, such as rewards, costs, degrees of satisfaction, and so on. 
In fact, the combination of $\ltl$ with quantitative properties is used to express more nuanced and complex specifications (see Figure~\ref{fig:fractional_df_req}).
Subsequently, synthesis algorithms from combination specifications have followed~\cite{ding2014optimal,he2017reactive,lahijanian2015time}.


This work investigates the problem of reactive synthesis from specifications that combine {\em hard qualitative constraints} expressed by $\ltl$ with {\em soft quantitative constraints} expressed by {\em discounted-sum rewards}.  Discounted-sum rewards are well-suited for infinite-horizon executions because the discounted-sum is guaranteed to converge on infinite-sequence of costs whereas other aggregation functions such as limit-average may not. Discounted-sum encodes diminishing returns.  As a result, the combination of $\ltl$ with discounted-sum rewards frequently appears in the automated construction of systems using planning and reinforcement learning~\cite{bozkurt2020control,camacho2017non,camacho2019ltl,hasanbeig2019reinforcement,kalagarla2021optimal,kwiatkowska2017prism}. Note, however, these works only deal with a single player case (controllable system agent) while reactive synthesis also assumes the presence of an uncontrollable environment. 

 \begin{figure}[t]
    \centering
    \includegraphics[width = .10\textwidth]{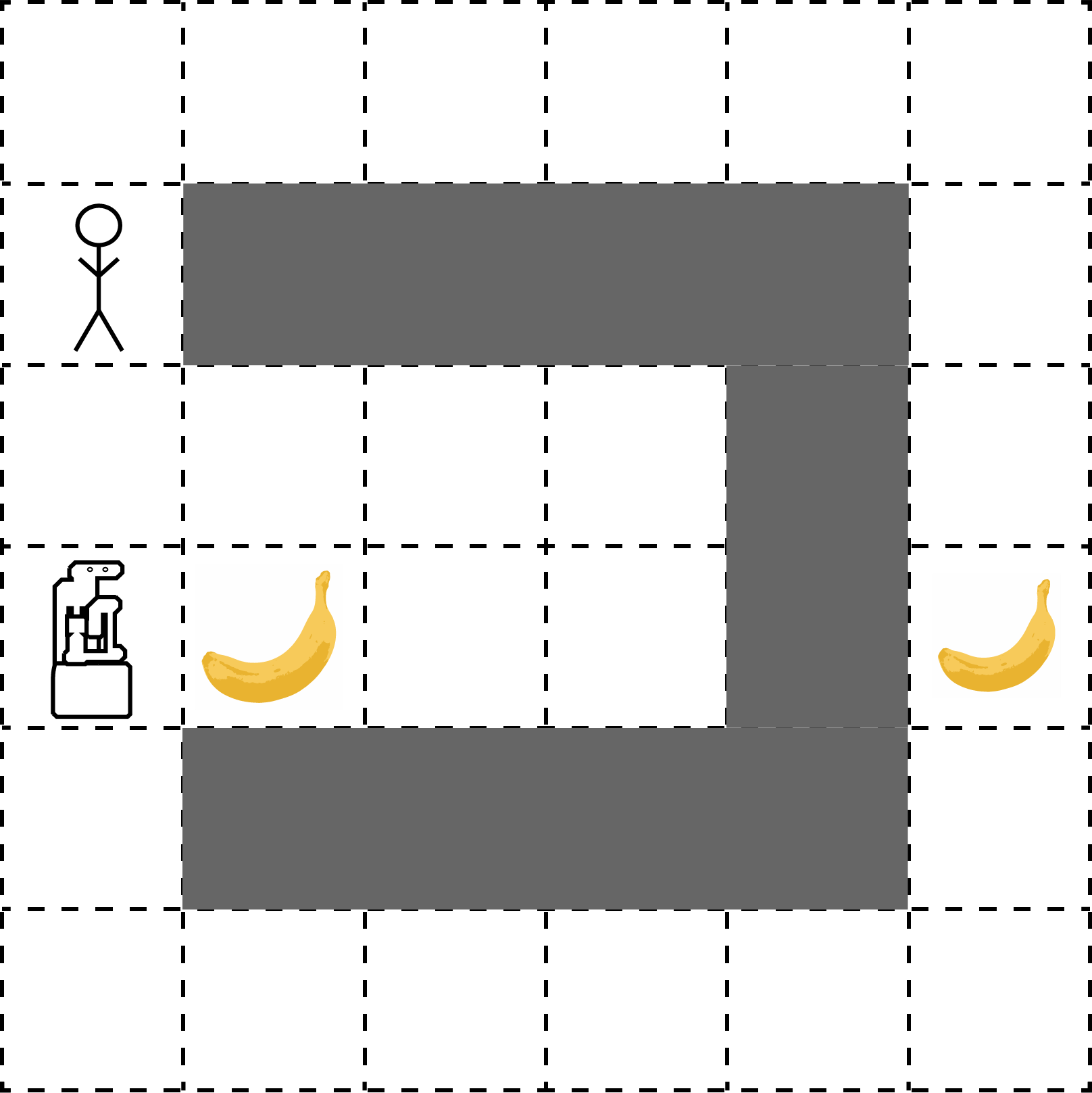}
    \caption{Example scenario: The (controlled) robot must retrieve objects and avoid the (uncontrolled) human in a grocery store. The robot's hard constraint is to retrieve objects from its grocery list without colliding with the walls (in grey) or human. Its soft constraint is to socially (Manhattan) distance itself from the human. A fractional discount factor makes the robot less ``greedy.'' }
    \label{fig:fractional_df_req}
\end{figure}

Broadly speaking, there are two approaches to reactive synthesis from $\ltl$ and discounted-sum rewards. The first approach is based on {\em optimization} of the discounted-sum reward. A strategy that optimizes the discounted-sum reward alone is guaranteed to exist in deterministic, turn-based settings~\cite{shapley1953stochastic}. This existence guarantee, however, is lost upon combination with $\ltl$ constraints. 
For example, consider a two-state game where state $s_0$ gives negative reward and state $s_1$ positive. Each state can transition to all other states. Our $\ltl$ objective is ($Globally\ Eventually\ s_0$). Clearly, there exists no strategy that simultaneously maximizes the discounted-sum reward and satisfies the $\ltl$ objective~\cite{chatterjee2017quantitative}. To this end, an alternate synthesis task is to compute an optimal strategy from those that satisfy the $\ltl$ constraint~\cite{wen2015correct}. Unfortunately, even here existing synthesis algorithms may generate a sub-optimal strategy. 
Overall,  synthesis algorithms from $\ltl$ and discounted-sum rewards that optimize the discounted-sum reward, in one way or another, have hitherto failed to provide guarantees of correctness or completeness.

The second approach to synthesis from $\ltl$ and discounted-sum rewards is based on {\em satisficing} the discounted-sum reward. A strategy is satisficing with respect to a given threshold value $v\in\Q$ if it guarantees the discounted-sum reward of all executions will exceed $v$.  The synthesis task, therefore, is to compute a strategy that satisfies the $\ltl$ specification and is satisficing w.r.t. the threshold value. 
The advantage of this approach is that when the discount factor is an integer, an existing synthesis algorithm is both sound and complete~\cite{BCVTACAS21}. The method builds on novel automata-based technique for quantitative reasoning called {\em comparator automata}~\cite{BCVCAV18,BCVFoSSaCS18}. The central result of comparator automata is that for integer discount factors, examining whether the discounted-sum of an execution exceeds a given threshold reduces to determining the membership of the execution in an (B\"uchi) automaton. Thus, satisficing goals are precisely captured by a comparator. This insight allows for elegant combination of satisficing and temporal goals since both are automata-based.  
The disadvantage of this method is that it cannot be applied with non-integer discount factors since the comparator for non-integer discount factors are not represented by automata. This is a severe limitation because in practice the discount factor is taken to be a fractional value between 1 and 2 in order to reason over a long-horizon~\cite{sutton2018introduction}.  Consider Fig.~\ref{fig:fractional_df_req}. If an integer discount factor greater than 1 is used, the robot will be ``greedy'' and obtain the immediate reward at the cost of becoming ``trapped'' by the human. The fractional discount factor is necessary so the robot recognizes the longer-term benefits to avoid becoming ``trapped.''

The central contribution of this work is a theoretically sound algorithm for reactive synthesis from $\ltl$ and satisficing discounted-sum goals for the case when the discount factor ranges between 1 and 2 (specifically of the form $1+2^{-k}$ for positive integer values of $k$). To the best of our knowledge, this is the first synthesis algorithm from $\ltl$ and discount-sum rewards that offers theoretical guarantees of correctness and is practically applicable.

Our solution is also based on comparator automata. We bypass the issue with fractional discount factors by introducing approximations into the comparator framework. We show that comparators for approximations of discounted-sum with fractional discount factors can be represented by B\"uchi automata.  
In brief, we show that for fractional discount
factors, examining whether the discounted-sum of an execution {\em approximately exceeds} a threshold value can be determined by membership of the execution in a B\"uchi automaton. This combined with synthesis techniques for $\ltl$ gives rise to a purely automata-based algorithm for $\ltl$ and discounted-sum rewards, and thus preserves soundness.

Due to the use of approximation, our algorithm is no longer complete. To this end, we evaluate the practical utility of our algorithm on case studies from robotics planning. Our evaluation demonstrates that our sound but incomplete procedure succeeds in efficiently constructing high-quality strategies in complex domains from nuanced  constraints. 

\section{Preliminaries}
\label{Sec:Prelims}

\subsection{Automata and Formal Specifications}

\subsubsection{B\"uchi Automata and Co-Safety Automata.}
A  {\em B\"uchi automaton} is a tuple
  $\A = (\Statess$, $\Sigma$, $\delta$, $s_\mathcal{I}$, $\Final)$, where
  $ \Statess $ is a finite set of {\em states}, $ \Sigma $ is a finite {\em input alphabet},  $ \delta \subseteq (\Statess \times \Sigma \times \Statess) $ is the   {\em transition relation}, state $ s_\mathcal{I} \in \Statess $ is the {\em initial state}, and $ \Final \subseteq \Statess $ is the set of {\em accepting states}.
A B\"uchi automaton is {\em deterministic} if for all states $ s $ and
inputs $a$, $ |\{s'|(s, a, s') \in \delta \textrm{ for some $s'$} \}|
\leq 1 $.
For a word $ w = w_0w_1\dots \in \Sigma^{\omega} $, a {\em run} $ \rho$ of $ w $ is a sequence of states $s_0s_1\dots$ s.t.
$ s_0 = s_\mathcal{I}$, and $ \tau_i =(s_i, w_i, s_{i+1}) \in \delta $ for all $i$.
Let $ \mathit{inf}(\rho) $ denote the set of states that occur infinitely
often in run ${\rho}$. 
A run $\rho$ is an {\em accepting run} if $ \mathit{inf}(\rho)\cap \Final \neq \emptyset $. A word $w$ is an accepting word if it has an accepting run. 
B\"uchi automata are  closed under set-theoretic union, intersection, and complementation~\cite{thomas2002automata}. 

A {\em co-safety automata} is a deterministic B\"uchi automata with a single accepting state. Additionally, the accepting state is a sink state~\cite{kupferman1999model}.

\subsubsection{Comparator Automata.}


Given an aggregate function $f:\Z^{\omega}\rightarrow \mathbb{R}$, equality or inequality relation $\mathsf{R} \in \{<, >, \leq, \geq, =, \neq\}$, and a threshold value $v \in \Q$, the {\em comparator automaton for $f$ with upper bound $\mu$, relation $\mathsf{R}$, and threshold $v\in\Q$} is an automaton that accepts an infinite word $A$ over the alphabet $\Sigma=\{-\mu,-\mu+1,\cdots \mu\}$ iff $f(A)$ $\mathsf{R}$ $v$ holds~\cite{BCVFoSSaCS18,BCVlmcs2019}. 

The discounted-sum of an infinite-length weight-sequence $W = w_0w_1\dots$ with discount factor $d>1$ is given by $\DSum{W}{d} = \sum_{i=0}^\infty \frac{w_i}{d^i}$.
The comparator automata for the discounted-sum has been shown to be a safety or co-safety automata when the discount factor $d>1$ is an integer, for all values of  $\R$,  $\mu$ and $v$. It is further known to not form a B\"uchi automata for non-integer  discount factors $d>1$, for all values of $\R$, $\mu$ and $v$~\cite{BVCAV19,BCVTACAS21}.

\subsubsection{Linear Temporal Logic.}

{Linear Temporal Logic} ($\ltl$) extends propositional logic with infinite-horizon temporal operators. The syntax of $\ltl$ is defined as $\varphi:= a \in \ap \mid \ltlNeg \varphi \mid \varphi \land \varphi \mid \varphi \lor \varphi \mid \ltlX \varphi \mid \varphi \ltlU \varphi \mid \ltlF \varphi \mid \ltlG \varphi$. 
Here $\ltlX$ (Next), $\ltlU$ (Until), $\ltlF$ (Eventually), $\ltlG$ (Always) are temporal operators.
The semantics of $\ltl$ can be found in~\cite{pnueli1977temporal}.

\subsection{Two-Player Graph Games}

\subsubsection{Reachability Games.}

 A reachability game $G = (V, \init, E, \F)$ consists of a directed graph $(V,E)$, initial state $\init$, and non-empty set of accepting states $\F\subseteq V$.
The set $V$ is partitioned into $V_0$ and $V_1$. 
For convenience, we assume every state has at least one successor. A game is played between two players $P_0$ and $P_1$.

A {\em play} in the game is created by the players moving a token along the edges as follows:
at the beginning, the token is at the initial state. If the token's current position $v$ belongs to $V_i$, then $P_i$ chooses the next position from the successors of $v$. Formally,
a play $\rho = v_0v_1v_2\dots$ is an infinite sequence of states such that $v_0 = v_{\mathsf{init}}$ and  $(v_k, v_{k+1}) \in E$ for all $k\geq 0$. 
A play is  {\em winning for player $P_1$} in the game if it visits an accepting state, and {\em winning for player $P_0$} otherwise.

 A {\em strategy} for a player is a
recipe that guides the player on which state to go next to based on the history of a play.  
A {\em strategy is  winning for a player $P_i$} if for all strategies of the opponent player $P_{1-i}$, all resulting plays are winning for $P_i$. 
 To {solve} a graph game is to determine whether there exists a winning strategy for player $P_1$. 
Reachability games are solved in $\O(|V|+|E|)$~\cite{thomas2002automata}.

\subsubsection{Quantitative Graph Games.}

A { quantitative graph game} (quantitative game, in short) is given by $G = (V  = V_0 \uplus V_1, \init, E, \gamma, \L, d)$ where $V$, $V_0$, $V_1$, $\init$, $E$, plays, and strategies are defined as earlier.  
Each edge is associated with a {\em cost} determined by the cost function $\gamma: E \rightarrow \Z$, and $d>1$ is the {\em discount factor}.
The cost-sequence of a play $\rho$ is the sequence $w_0w_1w_2\dots$ where  $w_k = \gamma((v_k, v_{k+1}))$ for all $i\geq 0$,
The cost of play $\rho$ is the discounted-sum of its cost sequence with discount factor $d>1$.
A  labelling function $\L:V\rightarrow 2^{\ap}$ maps states to propositions from the set $\ap$. The {\em label sequence} of a play $\rho$ is given by $\L(v_0)\L(v_1)\dots$.

\section{Problem Formulation and  Overview}
\label{Sec:Problem}

The two players, the controllable system and uncontrollable environment, interact in a domain described by a quantitative game $G$. The specification of the system player is a combination of hard and soft constraints.

The hard constraint is given as by an $\ltl$ formula $\varphi$. 
A play in $G$ satisfies  formula $\varphi$ if its labelled sequence satisfies the formula. 
We say, a strategy for the system player satisfies a formula $\varphi$ if it guarantees that all resulting plays will satisfy the formula. We call such a strategy {\em $\varphi$-satisfying}.

The soft constraints are given by satisficing goals. W.l.o.g, the system and environment players maximize and minimize the cost of plays, respectively.
Given  a threshold value $v\in\Q$, a play is $v$-satisficing for the system (maximizing) player if its cost is greater than or equal to $v$. Conversely, a play is $v$-satisficing for the environment (minimizing) player if its cost is less than $v$. 
A strategy is $v$-satisficing for a player if it guarantees all resulting plays are $v$-satisficing for the player. 

We are interested in solving the following problem:

\begin{problem} [Reactive Synthesis from Satisficing and Temporal Goals]
Given a quantitative game $\G$, a threshold value $v \in \Q$, and an $\ltl$ formula $\varphi$, the problem of {\rm reactive synthesis from satisficing and temporal goals} is to compute a strategy for the system player that is $\varphi$-satisfying and $v$-satisficing for the player, if such a strategy exists. 
\end{problem}

The problem is solved for integer discount factors~\cite{BCVTACAS21}.

\paragraph{Algorithm Overview.}
In this paper, we extend to fractional discount factors $1<d<2$, yielding practical applications of the synthesis problem. In particular, we solve the problem for $d = 1+2^{-k}$ where $k>0$ is an integer. Since the comparator for discounted-sum with fractional discount factors are not representable by B\"uchi automata, we construct a comparator automata for lower approximations of discounted-sum. This comparator soundly captures the criteria for $v$-satisficing for system player. In particular, if the comparator accepts the weight sequence of a play, then the play must be $v$-satisficing for the player. Therefore, just like $\ltl$ goals, the satisficing goal is also soundly captured by an automaton. Thus, we can reduce the synthesis problem to parity games via appropriate synchronized product constructions of both the automata-based goals. 

The comparator construction for approximation of discounted-sum is presented in Section~\ref{Sec:Comparator} and the reduction to games on graphs is presented in Section~\ref{sec:finalalgorithm}. 

\section{Comparator Construction}
\label{Sec:Comparator}


This section develops the key machinery required to design our theoretically sound algorithm for synthesis from temporal and satisficing goals with fractional discount factors.
We construct comparator automata for a lower approximation of discounted-sum for fractional discount factors of the form $d=1+2^{-k}$ where $k>0$ is an integer. 
We show these comparators are represented by co-safety automata. 

This section is divided in two parts. Section~\ref{sec:definition} defines an aggregate function that approximates the discounted-sum. Section~\ref{Sec:ApproxDSComparator} constructs a comparator for this function. 

Unless stated otherwise, we assume the approximation factor is of the form $\varepsilon=2^{-p}$ where $p>0$ is an integer. Please refer to the Appendix for missing proofs and details.

\subsection{Approximation of Discounted-Sum}
\label{sec:definition}
Given parameters $k,p>0$ of the discount factor and the approximation factor, respectively,  let $\roundL{x}$ be the largest integer multiple of $2^{-(p+k)}$ that is less than or equal to $x$, where $x \in\Re$.  
Let $W[\dots n]$ denote the $n$-length prefix of a weight-sequence $W$

Then, the {\em lower approximation of discounted-sum} of an infinite-length weight-sequence $W$ with discount  factor $d>1$ and approximation factor $\varepsilon>0$ is defined as 
\begin{align*}
    \DSumL{W} &= \lim_{n \rightarrow \infty} \frac{\gapL{W[\dots n]}}{d^{n-1}}
\end{align*}
where the {\em lower gap value} of a finite-length weight-sequence $U$ is defined as 
\begin{align*}
    \gapL{U} = 
    \begin{cases}
    0, \text{ if } |U| = 0 \\
    \mathsf{roundLow}(d\cdot \gapL{V}+v, \\
    ~~~~~~~~~~~k, p), \text{ if } U=V\cdot v
    \end{cases}
\end{align*}

Finally, the definition of $\mathsf{DSLow}$ is completed by  proving $\mathsf{DSLow}$ approximates the discounted-sum of sequences within an additive factor of $d\cdot\varepsilon$:
\begin{theorem}
\label{thrm:ApproxDSLower}
Let $d = 1+2^{-k}$ be the  discount factor and $\varepsilon=2^{-p}$ be the approximation  factor, for rational parameters $p,k>0$. Let $W$ be an infinite-length weight sequence. Then, $ 0\leq \DSum{W}{d} - \DSumL{W} < d\cdot \varepsilon $.
\end{theorem}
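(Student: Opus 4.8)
The plan is to establish the two inequalities $0 \leq \DSum{W}{d} - \DSumL{W}$ and $\DSum{W}{d} - \DSumL{W} < d\cdot\varepsilon$ by first controlling the finite-prefix analogue and then passing to the limit. The natural quantity to track is the \emph{gap discrepancy} at prefix length $n$, namely $e_n := \sum_{i=0}^{n-1} w_i d^{\,n-1-i} - \gapL{W[\dots n]}$, which is exactly $d^{n-1}$ times the difference between the $n$-th partial sum of $\DSum{W}{d}$ and the $n$-th term of the sequence defining $\DSumL{W}$. I would prove by induction on $n$ that $0 \leq e_n < 2^{-(p+k)}\cdot\frac{d}{d-1}$, or more precisely the sharp bound $0 \leq e_n < 2^{-(p+k)}\cdot \frac{d^n - 1}{d-1}$ scaled appropriately — the point being a \emph{uniform} bound on $e_n$ that after division by $d^{n-1}$ yields something converging to $d\cdot\varepsilon$.

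The induction is the technical core. For the base case $n=0$ (or $n=1$) both quantities are $0$ (resp.\ determined by a single rounding step), so $e_0 = 0$. For the step, write $U = V\cdot w_{n-1}$ with $|V| = n-1$. The recurrence gives $\gapL{U} = \roundL{d\cdot\gapL{V} + w_{n-1}}$, while $\sum_{i=0}^{n-1} w_i d^{\,n-1-i} = d\cdot\big(\sum_{i=0}^{n-2} w_i d^{\,n-2-i}\big) + w_{n-1}$. Subtracting, $e_n = d\cdot e_{n-1} + \big((d\cdot\gapL{V} + w_{n-1}) - \roundL{d\cdot\gapL{V} + w_{n-1}}\big)$. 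The second term is the rounding error $\delta_n$, which by definition of $\roundL{\cdot}$ satisfies $0 \leq \delta_n < 2^{-(p+k)}$. So $e_n = d\cdot e_{n-1} + \delta_n$ with $e_{n-1} \geq 0$ and $\delta_n \geq 0$ giving $e_n \geq 0$; and the upper bound accumulates as $e_n < d\cdot e_{n-1} + 2^{-(p+k)}$, which by induction and a geometric-series estimate gives $e_n < 2^{-(p+k)}(1 + d + \cdots + d^{n-1}) = 2^{-(p+k)}\cdot\frac{d^n-1}{d-1}$.

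Dividing through by $d^{n-1}$: the partial-sum discrepancy is $e_n / d^{n-1}$, bounded below by $0$ and above by $2^{-(p+k)}\cdot\frac{d^n-1}{(d-1)d^{n-1}} = 2^{-(p+k)}\cdot\frac{d - d^{1-n}}{d-1}$. Taking $n\to\infty$, the left partial sum converges to $\DSum{W}{d}$ (classical, since $d>1$ and weights are bounded), the right sequence converges to $\DSumL{W}$ by definition, and the bound converges to $2^{-(p+k)}\cdot\frac{d}{d-1}$. Now substitute $d = 1+2^{-k}$, so $d - 1 = 2^{-k}$ and $\frac{d}{d-1} = \frac{1+2^{-k}}{2^{-k}} = 2^k + 1 = d\cdot 2^k$; hence the limit bound is $2^{-(p+k)}\cdot d\cdot 2^k = d\cdot 2^{-p} = d\cdot\varepsilon$. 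Since each $e_n/d^{n-1}$ is \emph{strictly} less than a quantity increasing to $d\cdot\varepsilon$, one must be slightly careful to conclude the limit satisfies the \emph{strict} inequality $\DSum{W}{d} - \DSumL{W} < d\cdot\varepsilon$ rather than just $\leq$; this follows because if $\delta_n = 0$ for all $n$ the difference is exactly $0 < d\varepsilon$, and otherwise some $\delta_{n_0} > 0$ forces a strictly positive gap in the telescoped estimate that survives the limit (or, more cleanly, one shows each partial-sum discrepancy is bounded by $d\varepsilon - d^{1-n}\cdot 2^{-(p+k)}/(d-1) \cdot(\text{positive})$, a bound strictly below $d\varepsilon$, and the limit of things strictly below $d\varepsilon$ that is itself being squeezed... ). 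The main obstacle is exactly this endpoint/strictness bookkeeping together with getting the geometric constant to collapse to $d\cdot\varepsilon$ on the nose using $d-1 = 2^{-k}$; the inductive inequality itself is routine once $e_n = d\cdot e_{n-1} + \delta_n$ is written down.
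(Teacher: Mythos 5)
Your proposal is correct and is essentially the paper's own argument: your $e_n$ is exactly $\gap{W[\dots n]} - \gapL{W[\dots n]}$, your recurrence $e_n = d\cdot e_{n-1}+\delta_n$ with $0\leq\delta_n<2^{-(p+k)}$ and the resulting geometric bound $2^{-(p+k)}\frac{d^n-1}{d-1}$ reproduce Lemma~\ref{lem:LowerGapDifBounded} (where that bound is written as $\gap{R_n}$ for the $n$-length resolution sequence $R_n$), and the limit computation collapsing to $d\cdot\varepsilon$ via $d-1=2^{-k}$ is identical. On the one loose end you flag --- the strict inequality surviving the passage to the limit, which the paper also glosses over --- the clean fix is to note $e_n/d^{n-1}=\sum_{i=1}^{n}\delta_i d^{1-i}$ with $\delta_1\leq 2^{-(p+k)}-c$ for some fixed $c>0$, so every partial discrepancy, and hence the limit, is at most $d\cdot\varepsilon-c<d\cdot\varepsilon$.
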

\begin{proof}[Proof Sketch]
While the definition of the lower approximation of discounted-sum may look notiationaly dense, it is inspired by an alternate definition of discounted-sum: $$\DSum{W}{d} = \lim_{n\rightarrow \infty} \frac{\gap{W[\dots n]}}{d^{n-1}}$$ where $\gap{U} = 0$ if $|U| = 0$ and $\gap{U} = d\cdot\gap{V} + v$ if $U=V\cdot v$. 

Intuitively, the lower gap value approximates $\mathsf{gap}$. Subsequently, $\mathsf{DSLow}$ approximates the discounted-sum.
\end{proof}

\subsection{Comparator for Approximation of DS}
\label{Sec:ApproxDSComparator}

This section presents the construction of a comparator for lower approximation of discounted-sum defined above. 

\begin{definition}[Comparator automata for lower approximation of DS]
\label{def:comparisonAutLow}
Let $\mu>0$ be an integer bound, and $k,p>0$ be integers.
The {\em comparator automata for lower approximation of discounted sum} with discount factor $d = 1+2^{-k}$, approximation factor $\varepsilon=2^{-p}$, upper bound $\mu$, threshold value $v\in\Q$, and inequality relation $\mathsf{R} \in \{\leq, \geq \}$ is an automaton over infinite weight sequences $W$ over the alphabet $\Sigma = \{-\mu,\dots, \mu\}$ that accepts $W$ iff $\DSumL{W}$ $\mathsf{R}$ $v$. 
\end{definition}

\subsubsection{Construction Sketch.}
We sketch the construction of the comparator for  lower approximation of discounted-sum. 
For sake of exposition, we begin the construction  for threshold value $v = 0$. W.l.o.g., we present  for the relation  $\geq$. Notations $\mu$,  $d=1+2^{-k}$, $\varepsilon=2^{-p}$, and $W$ are from  Definition~\ref{def:comparisonAutLow}.

\newcommand{\upperv}{\mathsf{upperLimit}}
\newcommand{\lowerv}{\mathsf{lowerLimit}}

The lower gap value of prefixes of a weight-sequence can be used as a proxy for acceptance of a weight-sequence in the comparator for the following two observations:
\begin{enumerate}
    \item $\DSumL{W} \geq 0$ for an infinite-length weight sequence $W$  iff there exists a finite prefix $A$ of $W$ such that $\gapL{A} \geq \mu\cdot 2^{k} + 2^{-p}$. Let us denote $\mu\cdot 2^{k} + 2^{-p}$ by $\upperv$.
    \item $\DSumL{W}$ cannot be greater than or equal to 0 iff there exists a finite prefix $A$ of $W$ such that $\gapL{A} \leq -\mu\cdot 2^k$. Let us denote $-\mu\cdot 2^k$ by $\lowerv$.
\end{enumerate}

So, the core idea behind our construction is two fold: (a) use states of the comparator to record the lower gap value of finite-length prefixes, and (b) assign transitions so that the final state of finite-prefix corresponds to its lower gap value.

To this end, we set the initial state to $0$ as the lower gap value of the $0$-length prefix is $0$. The transition relation mimics the inductive definition of lower gap value, i.e. there is a transition from a state $s$ on the alphabet (weight) $a$ to state  $t$ if $t = \roundL{d\cdot s + a}$. 
These ensure that the lower gap value of a finite-state word (finite-length weight-sequence) is detected from the final state in its run. Clearly, the transition relation is deterministic. 

The final piece of the construction is to restrict the automata to finitely many states and to determine its accepting states. 
Note that due to the enumerated observations it is sufficient to track the lower gap value for only as long as it lies between $\lowerv$ and $\upperv$. Observe that there are only finitely many such values of interest since lower gap value is always an integer multiple of $2^{-(p+k)}$. Thus, we have obtained a finite number of states. 
By the first observation, state $\upperv$ is made an accepting sink since every weight-sequence that visits $\upperv$ must be accepted by the comparator. 
Similarly, by the second observation, the state $\lowerv$ is made a non-accepting sink. 
This completes the construction for threshold value $v=0$.

To extend the construction to a non-zero threshold value $v\in\Q$, let $V$ be a {\em lasso} weight-sequence s.t. $\DSum{V}{d}=v$, the comparator incorporates $V$ into its construction. Specifically, we construct a comparator that accepts $W$ iff $\DSumL{W-V}\geq 0$. So, when $W$ is accepted then $\DSum{W}{d}\geq v$, otherwise $\DSum{W}{d}\leq v +d\cdot\varepsilon$.

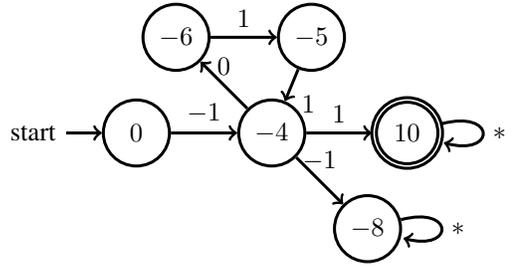
\begin{figure}[t]
\centering
\begin{tikzpicture}[->,auto,node distance=1.8cm,on grid,line width=0.4mm]
          \tikzstyle{round}=[thick,draw=black,circle]
          
  \node[state,initial] (0) {$0$};
  \node[state] (-4) [right of=0] {$-4$};
  \node[state, accepting] (10) [right of=-4] {$10$};
  \node[state] (-6) [above left of=-4] {$-6$};
  \node[state] (-5) [right of=-6] {$-5$};
  \node[state] (-8) [below right of=-4] {$-8$};

  \path (0) edge                node {$-1$} (-4)
        (-4) edge               node {$1$} (10)
        (10) edge [loop right]  node {$*$} (10)
        (-4) edge [above]       node {$0$} (-6)
        (-6) edge               node {$1$} (-5)
        (-5) edge               node {$1$} (-4)
        (-4) edge  [above]      node {$-1$} (-8)
        (-8) edge [loop right]  node {$*$} (-8);

\end{tikzpicture}
\caption{Snippet of comparator for $d=1.5$, $\varepsilon=0.5$, $\mu=1$, $v=0$, and $\geq$. Labels on states have been simplified. A state labelled by $s$ refers to a lower gap value of $s\cdot 2^{-(p+k)}$ }
\label{Fig:comparator}
\end{figure}

\paragraph{}
As an example, Figure~\ref{Fig:comparator} illustrates a snippet of the comparator with discount factor $d=1 + 2^{-1}$, approximation factor $\varepsilon=2^{-1}$, upper bound $\mu = 1$, threshold value $v=0$, and relation $\geq$. As one can see, weight sequence $A = -1, 0, 1^\omega$ with discounted-sum $\frac{1}{3}$ is accepting and weight sequence $B = -1,-1,1^{\omega}$ with discounted-sum $\frac{-1}{3}$ is non-accepting.

\begin{theorem}
\label{thrm:Comparatorlower}
The comparator automata for  lower approximation of discounted sum with discount factor $d = 1+2^{-k}$, approximation factor $\varepsilon=2^{-p}$, upper bound $\mu$, threshold $0$, and inequality relation $\mathsf{R} \in \{\leq, \geq \}$ is a co-safety automata with $\O(\frac{\mu}{(d-1)^2\cdot \varepsilon})$ states, where $k,p>0$ are integers.
\end{theorem}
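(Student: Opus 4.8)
The plan is to make the two-fold idea of the construction sketch concrete and then verify three properties of the resulting automaton: that it is co-safety, that it has the stated size, and that it coincides with the object of Definition~\ref{def:comparisonAutLow}. Fix the threshold $0$ and the relation $\geq$ (the relation $\leq$ is addressed at the end). Let $\thresh_l,\thresh_u\in\Z$ be the integers with $\thresh_l\cdot 2^{-(p+k)}=\lowerv$ and $\thresh_u\cdot 2^{-(p+k)}=\upperv$; these are well defined because $\lowerv=-\mu\cdot 2^k$ and $\upperv=\mu\cdot 2^k+2^{-p}$ are integer multiples of $2^{-(p+k)}$. Take the state set $\{\thresh_l,\thresh_l+1,\dots,\thresh_u\}$, initial state $0$, accepting set $\{\thresh_u\}$, and alphabet $\{-\mu,\dots,\mu\}$. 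For an interior state $s$ and letter $a$, let $i\in\Z$ with $i\cdot 2^{-(p+k)}=\roundL{d\cdot s\cdot 2^{-(p+k)}+a}$; set $\delta(s,a)=i$ if $\thresh_l\le i\le\thresh_u$, and otherwise clamp to the nearer endpoint; and set $\delta(\thresh_l,a)=\thresh_l$, $\delta(\thresh_u,a)=\thresh_u$ for every $a$. Since $\delta$ is a total function the automaton is deterministic, and its unique accepting state $\thresh_u$ is a sink, so by definition it is a co-safety automaton. For the size bound, the number of states is $\thresh_u-\thresh_l+1=(\upperv-\lowerv)\cdot 2^{p+k}+1=2\mu\cdot 2^{2k+p}+2^k+1$, which is $\O(\mu\cdot 2^{2k}\cdot 2^p)=\O(\frac{\mu}{(d-1)^2\cdot\varepsilon})$ since $d-1=2^{-k}$ and $\varepsilon=2^{-p}$.

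The substance is then to prove that the automaton accepts $W$ iff $\DSumL{W}\geq 0$. First I would establish, by induction on the length of $A$, the bridging invariant: if the run of a finite weight-sequence $A$ ends in state $s$, then $\gapL{A}=s\cdot 2^{-(p+k)}$ when $\thresh_l<s<\thresh_u$, $\gapL{A}\ge\upperv$ when $s=\thresh_u$, and $\gapL{A}\le\lowerv$ when $s=\thresh_l$. The empty prefix is the base case; the step splits on whether the penultimate state is interior --- where $\delta$ faithfully reproduces the recursion $\gapL{V\cdot v}=\roundL{d\cdot\gapL{V}+v}$, either landing at the matching interior state or clamping to a sink --- or is already a sink, in which case one invokes the two arithmetic ``trapping'' identities $d\cdot\upperv-\mu=\upperv+2^{-(p+k)}>\upperv$ and $d\cdot\lowerv+\mu=\lowerv$, together with the fact that $\upperv,\lowerv$ are multiples of $2^{-(p+k)}$, to see that once $\gapL{\cdot}$ reaches $\upperv$ (resp.\ falls to $\lowerv$) it never leaves. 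By this invariant and because $\thresh_u$ is an accepting sink while every other state is non-accepting, $W$ is accepted iff some finite prefix $A$ of $W$ satisfies $\gapL{A}\ge\upperv$.

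The last --- and hardest --- step is to show that this finite-prefix condition is equivalent to $\DSumL{W}\ge 0$; equivalently, to prove the two observations of the construction sketch, $\DSumL{W}\ge 0\iff\exists A:\gapL{A}\ge\upperv$ and $\neg(\DSumL{W}\ge 0)\iff\exists A:\gapL{A}\le\lowerv$. Here I would work from $\DSumL{W}=\lim_n\gapL{W[\dots n]}/d^{n-1}$, the limit existing by well-definedness of $\DSumL$ (which also follows from Theorem~\ref{thrm:ApproxDSLower}). The ``$\Leftarrow$'' directions are short given the trapping identities: a prefix with $\gapL{A}\ge\upperv$ forces every later gap to stay $\ge\upperv$, so $\gapL{W[\dots n]}/d^{n-1}\ge\upperv/d^{n-1}\to 0$ and hence $\DSumL{W}\ge 0$, and symmetrically on the low side. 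The ``$\Rightarrow$'' directions are the crux: one must argue that if the gap sequence stays strictly inside the window $(\lowerv,\upperv)$ forever then the sign of $\DSumL{W}$ is already forced, and this is precisely where the additive slack $2^{-p}$ built into $\upperv$ is spent --- it absorbs the per-step rounding loss of $\roundL{\cdot}$ and lets one compare the rounded gap recurrence against the ``clean'' fixed point $\pm\mu\cdot 2^k$ of the exact recurrence. I expect this asymptotic analysis of the \emph{rounded} recurrence, rather than any of the finite bookkeeping, to be the main obstacle. Finally, the relation $\leq$ is obtained by the mirror construction --- negating the weight on every transition, which swaps the roles of $\thresh_l$ and $\thresh_u$.
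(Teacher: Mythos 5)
Your proposal is correct and follows essentially the same route as the paper: the identical automaton (states $\thresh_l,\dots,\thresh_u$ with clamping at the two sink endpoints), the identical size count, and the identical inductive invariant tying the final state of a finite run to $\gapL{\cdot}$. The one step you explicitly leave open --- that a lower-gap sequence confined forever to the open window between $\thresh_l$ and $\thresh_u$ already forces the sign of $\DSumL{W}$ --- is precisely the step the paper's own proof also asserts without argument (``it is sufficient to show that \dots''), so your write-up is, if anything, slightly more detailed, since it supplies the two trapping identities and the easy direction of that final equivalence.
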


Therefore, $\DSum{W}{d}\geq 0$ if a weight-sequence $W$ is accepted by the comparator constructed above, and  $\DSum{W}{d} < d\cdot\varepsilon$ otherwise (Theorem~\ref{thrm:ApproxDSLower}-\ref{thrm:Comparatorlower}).

\section{Reactive Synthesis from Satisficing and Temporal Goals}
\label{sec:finalalgorithm}

This section presents the central contribution of this work. We present a theoretically sound algorithm for reactive synthesis from $\ltl$ and satisficing discounted-sum goals for the case when the discount factor ranges between 1 and 2, referred to as fractional discount factors hereon. 

Our algorithm utilizes the comparator automata for the lower approximation of discounted-sum for fractional discount factors constructed in Section~\ref{Sec:Comparator}. For ease of exposition, we present our solution in two parts. First, we present an algorithm for reactive synthesis from satisficing goals only in Section~\ref{Sec:AutomataSatisficing}. Next, we extend this algorithm to solve our original problem in Section~\ref{Sec:finalalgorithm}.

\subsection{Satisficing Goals}
\label{Sec:AutomataSatisficing}

We describe an automata-based solution for reactive synthesis from satisficing goals with fractional discount factor. 
Our solution reduces to reachability games using the comparator for lower approximation of discounted sum. 

The key idea behind our solution is that the said comparator can be treated as a {sufficient} criteria to compute a satisficing strategy for the system player. We explain this further. 
Take a comparator for the lower approximation for discounted-sum with discount factor $d$, approximation factor $\varepsilon$,  threshold $v\in\Q$, and relation $\geq$. Then, a play in the quantitative game is $v$-satisficing for the system player if the comparator accepts the cost sequence of the play. This can be derived directly from  Theorem~\ref{thrm:ApproxDSLower}-\ref{thrm:Comparatorlower}. So, a strategy is $v$-satisficing for the system player if it is winning with respect to the comparator. 
To this end, we construct a {\em  synchronized product} of the quantitative game with the comparator. The resulting product game is a reachability game since the comparator is represented by a co-safety automata. Formally,

\begin{theorem}
\label{thrm:satisficing}
Let $G$ be a quantitative game with discount factor $d = 1+2^{-k}$, for integer $k>0$. Let $v\in Q$ be the threshold value and $\varepsilon=2^{-p}$ be the approximation factor. There exists a reachability game $\GA$ such that
\begin{itemize}
    \item If the system has a winning strategy in $\GA$, then the system has a $v$-satisficing strategy in $G$.
    \item If the environment has a winning strategy in $\GA$, then the environment has a $v+d\cdot\varepsilon$-satisficing strategy in $G$.
\end{itemize}
\end{theorem}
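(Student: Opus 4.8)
The plan is to construct the reachability game $\GA$ as a synchronized product of the quantitative game $G$ with the co-safety comparator automaton $\A$ for the lower approximation of discounted-sum with discount factor $d$, approximation factor $\varepsilon$, threshold $v$, and relation $\geq$, as provided by Theorems~\ref{thrm:ApproxDSLower}--\ref{thrm:Comparatorlower}. Concretely, I would take the state space of $\GA$ to be $V \times \State$, where $\State$ is the state set of $\A$; a vertex $(u, s)$ belongs to player $P_0$ (system) iff $u \in V_0$, and similarly for $P_1$; there is an edge $((u,s),(u',s'))$ iff $(u,u') \in E$ in $G$ and $s' = \delta(s, \gamma((u,u')))$ in $\A$; the initial vertex is $(\init, \Start)$; and the accepting set is $V \times \{\thresh_u\}$, i.e., the product states whose comparator component is the accepting sink of the co-safety automaton. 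Since the comparator is a deterministic co-safety automaton whose only accepting state is an absorbing sink, reaching the accepting set is a reachability condition, so $\GA$ is genuinely a reachability game (solvable in linear time by the earlier cited result).

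Next I would establish the bijective correspondence between plays: since the comparator component is determined by the $G$-component of a play (the transition function $\delta$ is deterministic and driven by the cost labels $\gamma$), every play $\rho = v_0 v_1 \cdots$ in $G$ lifts uniquely to a play $\hat\rho = (v_0, s_0)(v_1, s_1)\cdots$ in $\GA$ with $s_0 = \Start$ and $s_{i+1} = \delta(s_i, \gamma((v_i,v_{i+1})))$, and conversely every play in $\GA$ projects to a play in $G$. Under this correspondence, strategies for $P_i$ in $\GA$ correspond to strategies for the same player in $G$ (the $\State$-component carries no additional choice since $\delta$ is a function, and a finite-memory strategy in $\GA$ reading the comparator state is realizable in $G$ by tracking that state internally). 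Crucially, $\hat\rho$ visits the accepting set $V \times \{\thresh_u\}$ iff the run of $\A$ on the cost sequence of $\rho$ is accepting iff the comparator accepts the cost sequence of $\rho$.

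For the first bullet: suppose the system has a winning strategy $\sigma$ in $\GA$. Let $\tau$ be the corresponding system strategy in $G$. For any environment strategy in $G$, the resulting play $\rho$ lifts to a play $\hat\rho$ consistent with $\sigma$, hence $\hat\rho$ reaches $V \times \{\thresh_u\}$, so $\A$ accepts the cost sequence of $\rho$. By Theorem~\ref{thrm:Comparatorlower} this means $\DSumL{W} \geq 0$ for the ``shifted'' cost sequence $W$ (cost of $\rho$ minus the lasso witness for $v$), and by Theorem~\ref{thrm:ApproxDSLower} the true discounted-sum cost of $\rho$ is $\geq v$. Hence every play consistent with $\tau$ is $v$-satisficing for the system, so $\tau$ is a $v$-satisficing strategy in $G$. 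For the second bullet: suppose the environment has a winning strategy in $\GA$, i.e., a strategy forcing all plays to avoid $V \times \{\thresh_u\}$ forever. Taking the corresponding environment strategy in $G$, every consistent play $\rho$ has a cost sequence rejected by the comparator; by the contrapositive direction of Theorems~\ref{thrm:ApproxDSLower}--\ref{thrm:Comparatorlower} (the ``otherwise'' clause: $\DSum{W}{d} < d\cdot\varepsilon$ when $W$ is rejected, where $W$ is the $v$-shifted cost sequence), the true cost of $\rho$ is $< v + d\cdot\varepsilon$, i.e., $\rho$ is $(v+d\cdot\varepsilon)$-satisficing for the minimizing environment player. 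Thus the environment strategy is $(v+d\cdot\varepsilon)$-satisficing in $G$.

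The main obstacle I anticipate is the careful handling of the non-zero threshold $v$: the comparator of Theorems~\ref{thrm:Comparatorlower} is stated for threshold $0$, and the extension to general $v\in\Q$ (sketched in the Construction Sketch) proceeds by folding in a lasso weight-sequence $V$ with $\DSum{V}{d}=v$ and accepting $W$ iff $\DSumL{W - V}\geq 0$; one must verify that this shifted comparator is still finite-state and co-safety (it is, since the lasso has finitely many states and the gap recursion stays bounded), and that the one-sided guarantees translate correctly, namely acceptance $\Rightarrow \DSum{W}{d}\geq v$ and rejection $\Rightarrow \DSum{W}{d} \le v + d\cdot\varepsilon$ (equivalently $< v + d\cdot \varepsilon$ with the appropriate strictness), which is exactly where the additive error $d\cdot\varepsilon$ of Theorem~\ref{thrm:ApproxDSLower} and the gap between the two bullets comes from. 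A secondary point requiring care is confirming that a finite-memory strategy in the product genuinely yields an implementable strategy in $G$ — this is routine since the comparator is deterministic, but it should be stated so that the theorem's existential claims about strategies in $G$ are justified.
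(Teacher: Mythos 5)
Your proposal is correct and follows essentially the same route as the paper: a synchronized product of the quantitative game with the co-safety comparator for the lower approximation of discounted-sum, with the accepting set being the product states whose comparator component is the accepting sink, and both bullets derived from Theorems~\ref{thrm:ApproxDSLower}--\ref{thrm:Comparatorlower}. Your write-up is in fact more explicit than the paper's (which compresses the play/strategy correspondence and the non-zero-threshold shift into a one-line appeal to ``the standard synchronized product construction''), and your flagged concerns about the lasso shift for $v\neq 0$ and the realizability of the finite-memory product strategy are exactly the points the paper leaves implicit.
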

\begin{proof}
The product game synchronizes costs along edges in the quantitative game with the alphabet of the co-safety comparator. 
Let $G = (V  = V_0 \uplus V_1, \init, E, \gamma)$ be a quantitative game.
Let $\mu>0$ be the maximum absolute value of costs along transitions in $G$. 
Then, let $\mathcal{A} = (\State, \Start, \Sigma, \delta, \Final)$ be the co-safety comparator  for lower approximation of discounted-sum with upper bound $\mu$, discount factor $d=1+2^{-k}$, approximation factor $\varepsilon=2^{-p}$, threshold value $v$, and relation $\geq$.  Then, the reachability game is
$\GA = (W = W_0 \uplus W_1, s_0 \times \mathsf{init}, \delta_W, \F_W)$.
Here, $W = V \times S$, $W_0 = V_0 \times S$, and $W_1 = V_1 \times S$. 
Clearly, $W_0$ and $W_1$ partition $W$. 
The edge relation $\delta_W \subseteq W  \times W$ is defined such that edge $((v,s), (v', s')) \in \delta_W$ synchronizes between transitions $(v, v') \in E$ and $(s, a, s') \in \delta$  if $a = \gamma((v, v'))$ is the cost of transition $(v, v')$ in $G$.
State $ s_0 \times \mathsf{init}$ is the initial state and $\F_W = V \times \F$. 

It suffices to prove that a play is winning for the system in $\GA$ iff its cost sequence $A$  in $G$ satisfies $\DSumL{A}\geq 0$. This is ensured by the standard synchronized product construction and Theorem~\ref{thrm:Comparatorlower}. 
The reachability game $\GA$ is linear in size of the  quantitative graph and the comparator. 
\end{proof}

Theorem~\ref{thrm:satisficing} describes a sound algorithm for reactive synthesis from satisficing goals when the discount factor is fractional. The algorithm is not complete since it is possible that there is a $v+d\cdot\varepsilon$-satisficing strategy for the environment even when the system has a $v$-satisficing strategy.

\subsection{Satisficing and Temporal Goals}
\label{Sec:finalalgorithm}

 Finally, we present our theoretically sound algorithm for synthesis from $\ltl$ and discounted-sum satisficing goals for fractional discount factors. 

 The algorithm is essentially a sum of two parts. The algorithm combines the automata-based solution for satisficing goals (presented in Section~\ref{Sec:AutomataSatisficing}) with the classical automata-based solutions for $\ltl$ goals~\cite{pnueli1989synthesis}. Solving satisficing goals forms a reachability game while solving $\ltl$ goals forms a parity game. Thus, the final game which combines both of the goals will be a parity game.  Lastly, the algorithm will inherit the soundness guarantees from both of its parts. 


\begin{theorem}
\label{thrm:finalalgorithm}
Let $G$ be a quantitative game with discount factor $d = 1+2^{-k}$, for integer $k>0$. Let $\varphi$ be an $\ltl$ formula and $v \in \Q$ be  a threshold value. 
Let $\varepsilon=2^{-p}$ be the approximation factor. There exists a parity game $\GA$ such that
\begin{itemize}
    \item If the system has a winning strategy in  $\GA$, then the system has a  $v$-satisficing and $\varphi$-satisfying strategy in $G$.
    \item If the environment has a winning strategy in $\GA$, then then either it has a $v+d\cdot\varepsilon$-satisficing strategy or it has a winning strategy w.r.t. $\ltl$ formula in $G$.  
\end{itemize}
\end{theorem}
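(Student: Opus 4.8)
The plan is to build the parity game $\GA$ by taking a threefold synchronized product: the quantitative game $G$, the co-safety comparator automaton $\A$ for the lower approximation of discounted-sum (with parameters $d=1+2^{-k}$, $\varepsilon=2^{-p}$, threshold $v$, relation $\geq$, and $\mu$ the maximum absolute edge cost in $G$), and a deterministic parity automaton $\A_\varphi$ for the $\ltl$ formula $\varphi$ obtained by the classical construction of~\cite{pnueli1989synthesis}. First I would recall from Theorem~\ref{thrm:satisficing} that $\A$ tracks the lower gap value along the cost sequence and reaches its accepting sink $\thresh_u$ exactly when $\DSumL{A}\geq 0$, hence (via Theorems~\ref{thrm:ApproxDSLower}--\ref{thrm:Comparatorlower}) a play whose cost sequence is accepted by $\A$ is $v$-satisficing for the system, while a play whose cost sequence is \emph{rejected} (reaches the non-accepting sink $\thresh_l$, or stays strictly between the sinks forever) has $\DSum{\cdot}{d} < v + d\cdot\varepsilon$ and is thus $(v+d\cdot\varepsilon)$-satisficing for the environment.

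Next I would define the product $\GA$ on state set $V\times S\times Q_\varphi$, with the $V_0/V_1$ partition lifted componentwise, initial state the triple of initial states, and edges synchronizing a move $(v,v')\in E$ of $G$ with the $\A$-transition on the cost label $\gamma((v,v'))$ and the $\A_\varphi$-transition on the $\ltl$ label $\L(v)$ (choosing the labeling convention so that the standard $\ltl$-synthesis product is recovered). The acceptance condition for the system is a conjunction of "visit the comparator sink $\thresh_u$" (a reachability condition on the $S$-component) and "the $\A_\varphi$-component satisfies the parity condition." A reachability condition is expressible as a (very low index) parity condition, and a conjunction of two parity conditions can be encoded as a single parity condition on a product with a small bounded blow-up (a standard construction), so $\GA$ is genuinely a parity game; I would note the size and index bounds inherited from $\A$ (size $\O(\mu/((d-1)^2\varepsilon))$, index $2$) and from $\A_\varphi$ (doubly-exponential in $|\varphi|$).

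The correctness argument then has two directions, each a bookkeeping reduction along the standard product lemma (each play in $\GA$ projects to a unique play in $G$ with the same cost sequence and same $\ltl$ label sequence, and conversely). If the system wins $\GA$, its winning strategy projects to a strategy in $G$ under which every play both reaches $\thresh_u$ in the comparator component --- so its cost sequence satisfies $\DSumL{\cdot}\geq 0$, hence $\DSum{\cdot}{d}\geq v$ by Theorem~\ref{thrm:ApproxDSLower}, i.e. the play is $v$-satisficing --- and satisfies the parity condition in the $\A_\varphi$ component, hence its label sequence satisfies $\varphi$; so the projected strategy is $v$-satisficing and $\varphi$-satisfying. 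If the environment wins $\GA$, by determinacy of parity games it has a winning strategy; I would project it and argue that a conjunctive objective being violated means \emph{at least one} conjunct is violated along every consistent play, but since the two conjuncts live in disjoint components of the product and the environment strategy is fixed, a short argument shows the environment can guarantee a uniform violation: either it forces the comparator to avoid $\thresh_u$ forever on all plays --- giving a $(v+d\cdot\varepsilon)$-satisficing strategy in $G$ --- or else it forces the parity condition to fail on all plays --- giving a strategy that is winning with respect to (the negation of) $\varphi$ in $G$.

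The main obstacle I expect is precisely the last point: when the system's winning condition is a \emph{conjunction} $R \wedge P$, the environment winning means $\neg R \vee \neg P$, and one must be careful that the environment's winning strategy does not merely violate $R$ on some plays and $P$ on others but can be taken to violate a single fixed disjunct uniformly. The clean way to handle this is to observe that whether the environment can enforce $\neg R$ (a safety/co-reachability objective --- "never reach $\thresh_u$") is decidable and memoryless-determined independently; if it can, we are in the first case; if it cannot, then from every state the system can force reaching $\thresh_u$, and one argues that on the remaining subgame the environment's winning strategy must be enforcing $\neg P$, i.e. a parity objective against $\varphi$, yielding a $\varphi$-violating (equivalently, $\neg\varphi$-winning) environment strategy in $G$. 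Making this dichotomy rigorous --- essentially an application of the determinacy of the sub-game for the $R$-component together with attractor reasoning --- is the one step that needs genuine care rather than routine product bookkeeping; everything else follows the template of Theorem~\ref{thrm:satisficing} combined with classical $\ltl$-synthesis.
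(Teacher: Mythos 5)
Your construction---synchronizing $G$ with the co-safety comparator and with a DPA for $\varphi$, then projecting a winning system strategy back to $G$ to get the first bullet---is exactly the route the paper takes, and that part is fine. One technical point, though: your appeal to ``a conjunction of two parity conditions can be encoded as a single parity condition with a small bounded blow-up'' is false in general; such a conjunction is a Streett condition, and converting it to parity costs a latest-appearance-record-style blow-up. What saves the construction here is that one conjunct is co-safety with an accepting \emph{sink}: assign to product states whose comparator component is not $\thresh_u$ a priority that is losing for the system when it dominates, and to states whose comparator component is $\thresh_u$ the priority of the DPA component; since $\thresh_u$ is absorbing, this is a genuine parity condition of the same index as the DPA. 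This is the (implicit) reason the paper's product is a parity game, and you should say that rather than invoke the incorrect general fact.

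More substantively, you are right that the second bullet is the delicate point---the paper's own proof sketch does not address it at all, arguing only the system direction---but your dichotomy does not close the gap. From ``the environment cannot enforce avoiding $\thresh_u$'' you conclude that its winning strategy ``must be enforcing $\neg P$''; this does not follow. It is consistent that the system can force reaching $\thresh_u$, and can separately force the parity condition, yet cannot force both simultaneously, in which case the environment wins the conjunction game while possessing neither a uniform $(v+d\cdot\varepsilon)$-satisficing strategy nor a uniform $\varphi$-violating strategy. What actually follows from the environment winning $\GA$ is the per-play statement: there is an environment strategy under which \emph{every} resulting play either has discounted sum below $v+d\cdot\varepsilon$ or violates $\varphi$. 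If the second bullet is meant in the stronger ``one fixed disjunct uniformly'' sense, it needs a separate argument (or a counterexample); your attractor-based sketch as written fails on the scenario above.
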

\begin{proof}[Proof Sketch]
The reduction consists of two steps of synchronized products: first with the comparator to fulfil the $v$-satisficing goal and then with the automaton corresponding to the $\ltl$ goal. 
The first step conducts the reduction from Theorem~\ref{thrm:satisficing} while lifting the labelling function from the quantitative game to the reachability game: If a state $s$ is labeled by $l$ in the quantitative game, the all states of the form $(s, q)$ will be labelled by $l $ in the reachability game.
The second product synchronizes between the atomic propositions in the reachability game (with a labelling function) and the deterministic parity automaton (DPA)  corresponding to the $\ltl$ specification, thus combining their winning conditions. 

Observe that the product construction is commutative, i.e., one can first construct the product of $G$ with the DPA of the $\ltl$ goal and then with the comparator automata. 

In either case, we generate a parity game of size linear in $|G|$, DPA of the $\ltl$ specification, and the comparator. A winning strategy for the system player in this game is also $v$-satisficing and $\varphi$-satisfying for the same player in $G$.  
\end{proof}

A salient feature of our algorithm is that the complexity to solve the final product game is primarily governed by the temporal goal and not the satisficing goal. 
What we mean is that 
if the temporal goal is given by a fragment of $\ltl$, such as co-safe $\ltl$~\cite{lahijanian2015time}, then the final product game would be reachability game. This is because co-safe $\ltl$ formulas are represented by co-safety automata and thus their combination with comparators would also be a co-safety automata. More generally, if the temporal goal is a conjunction of safety and reachability goals, the resulting game would be a {\em weak-B\"uchi game}, which are also solved in linear time in size of the game~\cite{chatterjee2008linear}. This demonstrates that even though the comparator contributes to growing the state-space of the game linearly, whether the game is solved using efficient linear-time algorithms or higher complexity algorithms for parity games is determined by the temporal goal.  

This feature has implications on the practicality of our algorithm.
In practice, it has been observed that wide-ranging temporal goals in robotics domains can be expressed in simpler fragments and variants of $\ltl$, such as co-safe $\ltl$~\cite{lahijanian2015time} and $\mathsf{LTLf}$~\cite{he2017reactive}. These fragments can be expressed as conjunctions of safety and reachability goals. For this fragment  synthesis from temporal and satisficing goals can be solved in linear-time.

\section{Case Studies}
\label{sec:case_study}

The objective of our case studies is to demonstrate the utility of reactive synthesis from $\ltl$ and satisficing goals in realistic domains inspired from robotics and planning. 
Since ours is the first algorithm to offer theoretical guarantees with fractional discount factors, there are not any baselines to compare to. So, we focus on our scalability trends and identify future scalability opportunities.


\subsection{Design and Set-up}
 We examine our algorithm on two challenging domains inspired from robotic navigation and manipulation problems. Source code and benchmarks are open source\footnote{\url{https://github.com/suguman/NonIntegerGames}}.

\subsubsection{Grid World.}

The robot-human interaction is based on a classic $n\times n$ grid world domain (see Fig~\ref{fig:fractional_df_req}). The grid simulates a grocery store with static obstacles, e.g., placements of aisles. Each agent controls its own location and is only permitted to move in the cardinal directions

\newcommand{\ban}{\mathsf{reach\_banana}}
\newcommand{\wall}{\mathsf{collision\_obstacle}}
\newcommand{\human}{\mathsf{collision\_human}}
\newcommand{\negr}{\mathsf{negative\_reward}}
\newcommand{\posr}{\mathsf{positive\_reward}}

The robot's $\ltl$ constraint is to reach the locations of all items on its grocery list without colliding with the walls (in grey) or the dynamic human, thus combining safety and reachability goals. 
The robot's soft constraints are modelled to achieve two behaviors. The first one is to distance itself from the human. The second is to encode promptness in fulfilling its reachability goal $\ban$.  We model distancing with quantitative rewards using the Manhattan distance between the two agents. Suppose, the locations of the players are $(x_0,y_0)$ and $(x_1,y_1)$, then the reward received by the robot is given by  $\Big\lfloor\frac{\negr}{|x_0-x_1| + |y_0-y_1|}\Big\rfloor$, where $\negr < 0$ is an integer parameter.  We model promptness with an integer $\posr>0$ which the robot receives only when it reaches a location of each item on its grocery list for the first time.  The rewards are additive, i.e., the robot receives the sum of both rewards in every grid configuration. Then, it is reasonable to say that a play accomplishes these two behaviors if the discounted-sum reward of the robot is greater than or equal to 0, i.e., 0-satisficing plays/strategies are good for the robot.

\subsubsection{Conveyor Belt.}
Our second case study is inspired by cutting-edge applications in manipulation tasks~\cite{wells2021finite}. A robot must operate along a $r\times c$ conveyor belt with $r$ rows and $c$ columns across from a human.
When out of reach of the human, the robot can move quickly. Otherwise, it must proceed more slowly. The blocks move down the conveyor belt at a constant speed. Each agent controls the location of its arm. The human also controls the placement of new objects. New blocks are placed whenever a block is removed to maintain a constant number of blocks on the belt at all times. 

The robot's $\ltl$ goal is to avoid interfering with the human. As soft constraints, the robot gains a $\posr$ for every object it grasps and a $\negr$ for every object that falls off the belt. The rewards are additive in every belt configuration. The robot's goal is to ensure its total discounted-sum reward exceeds 0.

\begin{table}[t]
\centering
\begin{tabular}{cc}
\hline
\textbf{Dimensions}                 & \textbf{Number of States} \\ \hline
\multicolumn{2}{c}{\textbf{Grid World}}                       \\ \hline
$n=4$                               & 397                         \\ \hline
$n=6$                               & 2407                         \\ \hline
$n=8$                               & 8093                         \\ \hline
$n=10$                              & 20572                         \\ \hline
\multicolumn{2}{c}{\textbf{Conveyor Belt}}                    \\ \hline
$r\times c = 4\times 3$, ~ 2 blocks   & 9966                         \\ \hline
$r\times c = 5\times 3$, ~ 2 blocks   & 31547                         \\ \hline
$r \times c = 5 \times 3$, ~ 3 blocks & 60540                         \\ \hline
\end{tabular}
\caption{Complexity of Benchmarks: Number of states in product of the labelled quantitative game with the automata of its $\ltl$ specification.}
\label{tab:numofstates}
\end{table}
\paragraph{}

\begin{figure}[t]
    \centering
    \includegraphics[width=0.47\textwidth]{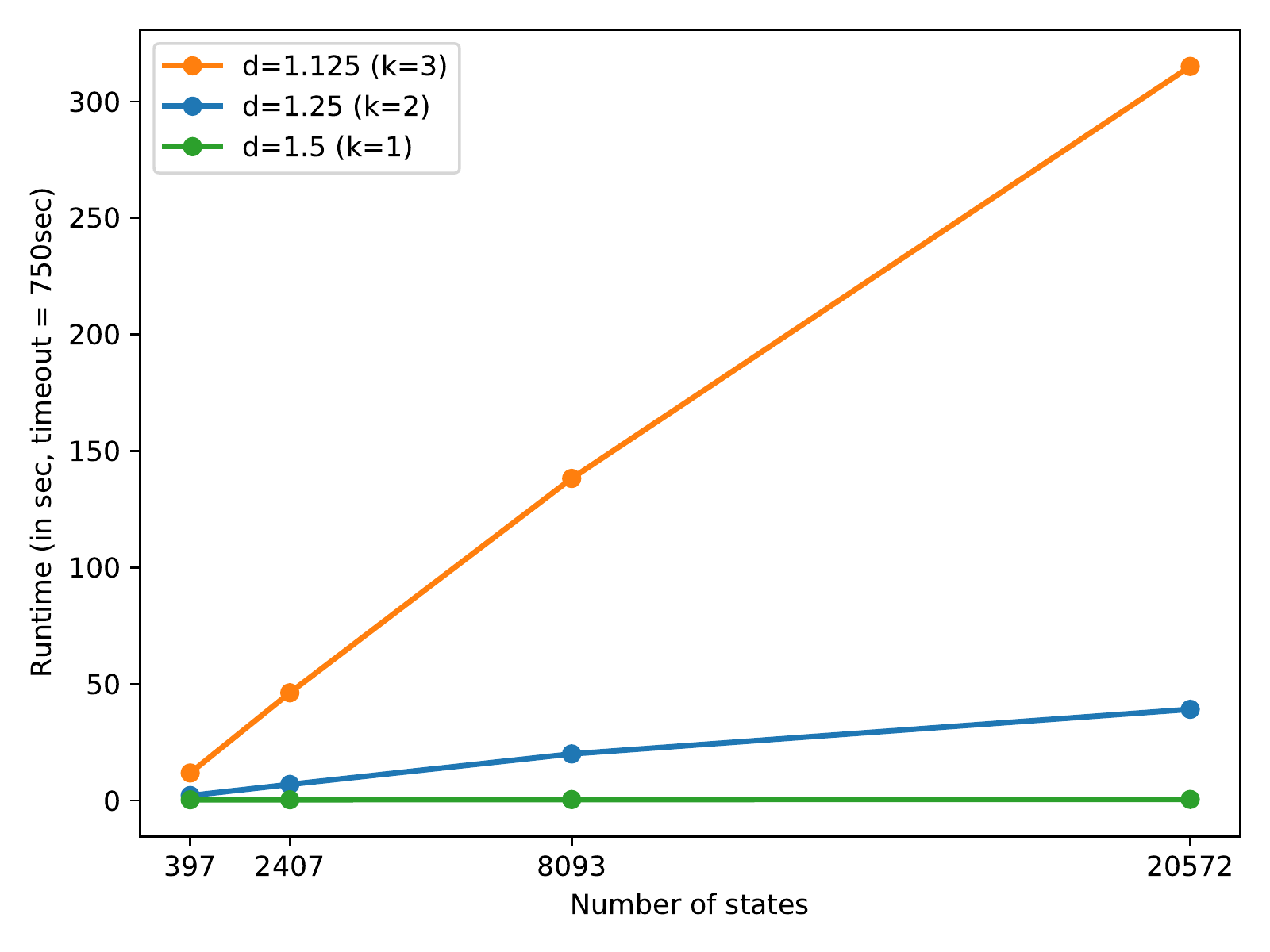}
    \caption{Scalability (Number of states). Plots runtime on Grid World with $\posr=10$ and $\negr = -2$.  $x$-axis are $n = 4,6,8,10$.}
    \label{fig:scaleinstates}
    
\end{figure}

On grid world, we take  $n= 4, 6,8,10$. On conveyor belt, we take $r\times c = 4\times 3, 5\times 3$ with 2 or 3 blocks. 
The hardness of our benchmarks is illustrated  Table~\ref{tab:numofstates}. The benchmarks have so many states since both scenarios have a large number of unique configurations.

Combined with values for $\posr$ and $\negr$, we create 20 grid world and 7 conveyor belt benchmarks. Every benchmark is run with $d = 1.5, 1.25, 1.125$ $(k=1,2,3)$, approx. factor $\varepsilon=0.5$ $(p=1)$ and threshold $v = 0$. 
Our prototype is implemented in $\mathsf{C++}$ on Ubuntu 18.04LTS. Experiments are run on an i7-4770 with 32GBs of RAM with a timeout of 750~$\sec$.

\subsection{Observations}

\begin{figure}[t]
    \centering
    \includegraphics[width=0.47\textwidth]{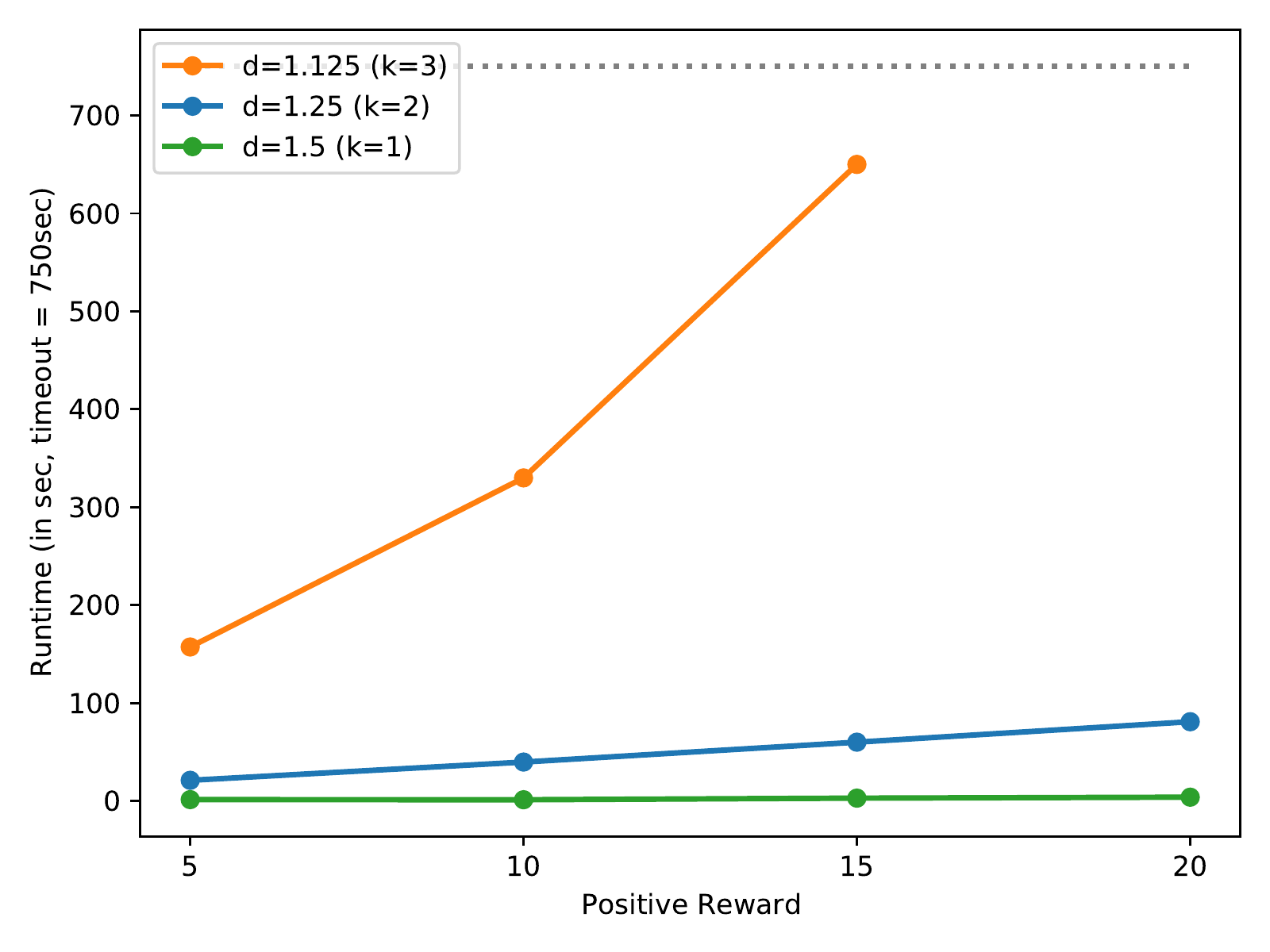}
    \caption{Scalability plot in $\posr$ (affects the size of the comparator). Plotting runtime on Grid world with $n=10$ (20572 states) and $\negr=-2$. }
    \label{fig:scaleinrewards}
\end{figure}

\begin{figure}[t]
    \centering
    \includegraphics[width=0.47\textwidth]{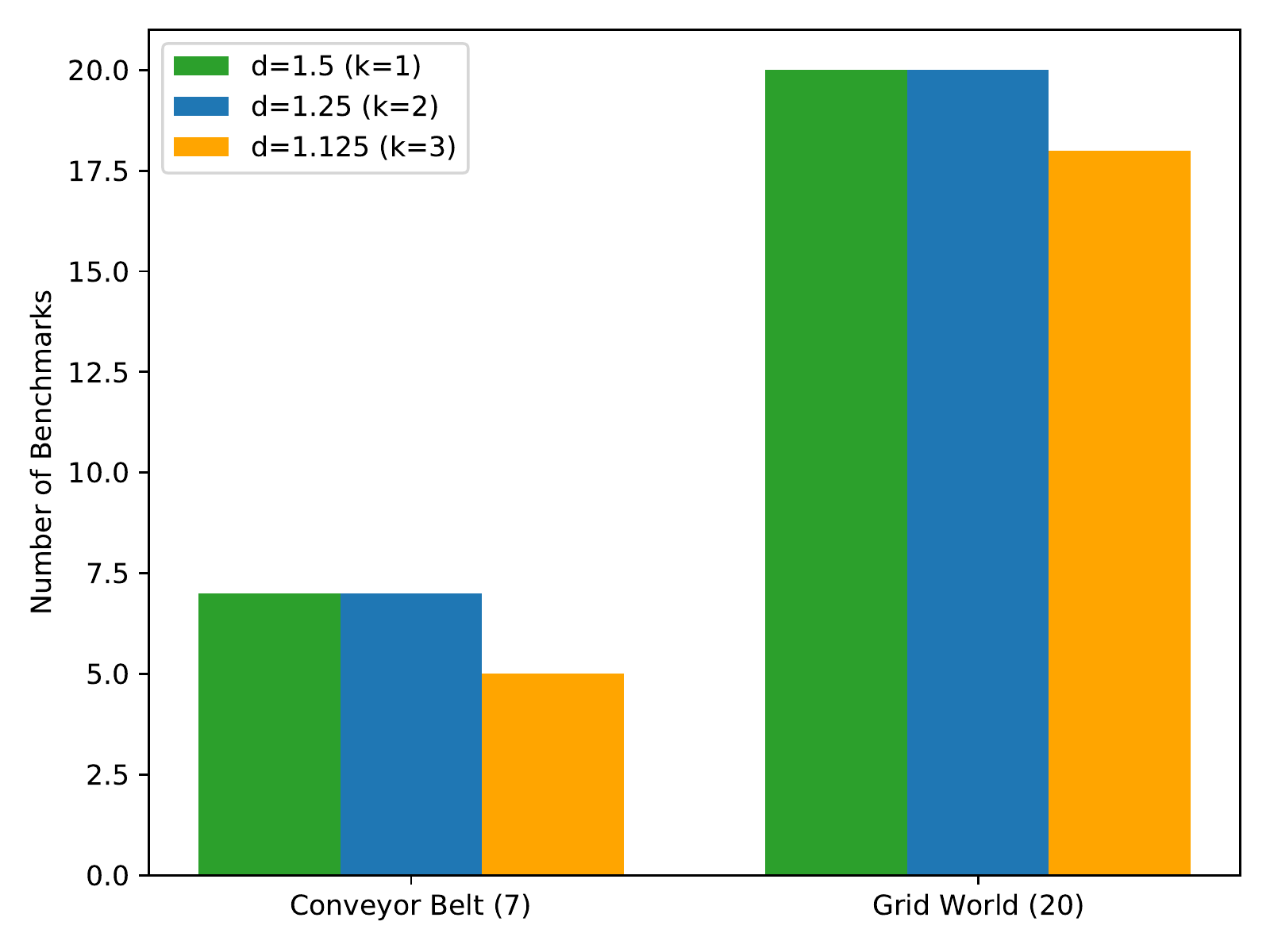}
    \caption{Number of benchmarks solved}
    \label{fig:numsolved}
\end{figure}

Our evaluation demonstrates that our solution successfully scales to very large benchmarks. Despite their difficulty, we solve almost all of our benchmarks (Figure~\ref{fig:numsolved}). Runtime examination indicates that our algorithm is linear in size of the game and the comparator, in practice.   
The scalability trends in size of the game for varying discount factors are shown in Figure~\ref{fig:scaleinstates}. 
Determining the dependence on the comparator automata is more involved since its size depends on several parameters, namely $\posr$, the discount factor, and the approximation factor. Figure~\ref{fig:scaleinrewards} suggests the algorithm is linear in $\posr$. The margin between the three discount factor curves on Fig~\ref{fig:scaleinstates}-\ref{fig:scaleinrewards} suggests a significant blow-up as the discount factor nears 1. Additional experiments (see Appendix) that vary the approximation factor also display a significant blow-up as the approximation factor decreases. These are not alarming since the size of the comparator is in the order of $\O(\posr)$, $\O((d-1)^{-2})$ and $\O(\varepsilon^{-1})$. These reflect that our current implementation is faithful to the theoretical analysis of the algorithm. 

These are encouraging results as our implementation uses explicit state representation. The overhead of this state representations can be very high. In some cases, we observed that for the large benchmarks about 70\% of the total compute time may be spent in constructing the product explicitly. Despite these issues with explicit-state representation, our  algorithm efficiently scales to large and challenging benchmarks. 
This indicates  potential for further improvements. 

In terms of quality of solutions, the resulting strategies are of better quality. For example, in Figure~\ref{fig:fractional_df_req} we observed that as the discount factor becomes smaller the robot is able to reason for a longer horizon and not get "trapped". Another benefit are the soundness guarantees. They are especially valuable in environments such as the Conveyor belt which are so complex that they preclude a manual analysis.

To conclude, our case studies demonstrates the promise of our approach in terms of its ability to scale and utility in practical applications, and encourage future investigations.

\section{Conclusion}

Combining hard constraints (qualitative) with soft constraints (quantitative) is a challenging problem with many applications to automated planning. 
This paper presents the first sound algorithm for reactive synthesis from $\ltl$ constraints with soft discounted-sum rewards when the discount factor is fractional. Our approach uses an automata-based method to solve the soft constraints, which is then elegantly combined with existing automata-based methods for $\ltl$ constraints to obtain the sound solution. Case studies on classical and modern domains of robotics planning demonstrate use cases, and also, shed light on recommendations for future work to improve scalability to open up exciting applications in robotics e.g. warehouse robotics, autonomous driving, logistics (supply-chain automation).

\section*{Acknowledgements}
We thank anonymous reviewers. This work is supported in part by NSF grant 2030859 to the CRA for the CIFellows Project, NSF grants IIS-1527668, CCF-1704883, IIS-1830549, CNS-2016656, DoD MURI grant N00014-20-1-2787, and an award from the Maryland Procurement Office.

\bibliography{main}

\begin{thebibliography}{28}
\providecommand{\natexlab}[1]{#1}

\bibitem[{Bansal, Chatterjee, and Vardi(2021)}]{BCVTACAS21}
Bansal, S.; Chatterjee, K.; and Vardi, M.~Y. 2021.
\newblock On Satisficing of Quantitative Games.
\newblock In \emph{Proc. of TACAS}.

\bibitem[{Bansal, Chaudhuri, and Vardi(2018{\natexlab{a}})}]{BCVCAV18}
Bansal, S.; Chaudhuri, S.; and Vardi, M.~Y. 2018{\natexlab{a}}.
\newblock Automata vs Linear-Programming Discounted-Sum Inclusion.
\newblock In \emph{Proc. of CAV}.

\bibitem[{Bansal, Chaudhuri, and Vardi(2018{\natexlab{b}})}]{BCVFoSSaCS18}
Bansal, S.; Chaudhuri, S.; and Vardi, M.~Y. 2018{\natexlab{b}}.
\newblock Comparator automata in quantitative verification.
\newblock In \emph{Proc. of FOSSACS}.

\bibitem[{Bansal, Chaudhuri, and Vardi(2018{\natexlab{c}})}]{BCVlmcs2019}
Bansal, S.; Chaudhuri, S.; and Vardi, M.~Y. 2018{\natexlab{c}}.
\newblock Comparator automata in quantitative verification (full version).
\newblock \emph{CoRR}, abs/1812.06569.

\bibitem[{Bansal and Vardi(2019)}]{BVCAV19}
Bansal, S.; and Vardi, M.~Y. 2019.
\newblock Safety and Co-safety Comparator Automata for Discounted-Sum
  Inclusion.
\newblock In \emph{Proc. of CAV}.

\bibitem[{Bozkurt et~al.(2020)Bozkurt, Wang, Zavlanos, and
  Pajic}]{bozkurt2020control}
Bozkurt, A.~K.; Wang, Y.; Zavlanos, M.~M.; and Pajic, M. 2020.
\newblock Control synthesis from linear temporal logic specifications using
  model-free reinforcement learning.
\newblock In \emph{2020 IEEE International Conference on Robotics and
  Automation (ICRA)}, 10349--10355. IEEE.

\bibitem[{Camacho, Bienvenu, and McIlraith(2019)}]{camacho2019towards}
Camacho, A.; Bienvenu, M.; and McIlraith, S.~A. 2019.
\newblock Towards a unified view of AI planning and reactive synthesis.
\newblock In \emph{Proc. of ICAPS}.

\bibitem[{Camacho et~al.(2017)Camacho, Chen, Sanner, and
  McIlraith}]{camacho2017non}
Camacho, A.; Chen, O.; Sanner, S.; and McIlraith, S.~A. 2017.
\newblock Non-markovian rewards expressed in LTL: guiding search via reward
  shaping.
\newblock In \emph{In Proc. of SOCS}.

\bibitem[{Camacho et~al.(2019)Camacho, Icarte, Klassen, Valenzano, and
  McIlraith}]{camacho2019ltl}
Camacho, A.; Icarte, R.~T.; Klassen, T.~Q.; Valenzano, R.~A.; and McIlraith,
  S.~A. 2019.
\newblock {LTL} and Beyond: Formal Languages for Reward Function Specification
  in Reinforcement Learning.
\newblock In \emph{Proc. of IJCAI}.

\bibitem[{Chatterjee(2008)}]{chatterjee2008linear}
Chatterjee, K. 2008.
\newblock Linear time algorithm for weak parity games.
\newblock \emph{arXiv preprint arXiv:0805.1391}.

\bibitem[{Chatterjee et~al.(2017)Chatterjee, Henzinger, Otop, and
  Velner}]{chatterjee2017quantitative}
Chatterjee, K.; Henzinger, T.~A.; Otop, J.; and Velner, Y. 2017.
\newblock Quantitative fair simulation games.
\newblock \emph{Information and Computation}.

\bibitem[{Church(1957)}]{church1957applications}
Church, A. 1957.
\newblock Applications of recursive arithmetic to the problem of circuit
  synthesis.
\newblock \emph{Institute for Symbolic Logic, Cornell University}.

\bibitem[{Ding et~al.(2014)Ding, Smith, Belta, and Rus}]{ding2014optimal}
Ding, X.; Smith, S.~L.; Belta, C.; and Rus, D. 2014.
\newblock Optimal control of Markov decision processes with linear temporal
  logic constraints.
\newblock \emph{TACON}.

\bibitem[{Hasanbeig et~al.(2019)Hasanbeig, Kantaros, Abate, Kroening, Pappas,
  and Lee}]{hasanbeig2019reinforcement}
Hasanbeig, M.; Kantaros, Y.; Abate, A.; Kroening, D.; Pappas, G.~J.; and Lee,
  I. 2019.
\newblock Reinforcement learning for temporal logic control synthesis with
  probabilistic satisfaction guarantees.
\newblock In \emph{2019 IEEE 58th Conference on Decision and Control (CDC)},
  5338--5343. IEEE.

\bibitem[{He et~al.(2017)He, Lahijanian, Kavraki, and Vardi}]{he2017reactive}
He, K.; Lahijanian, M.; Kavraki, L.; and Vardi, M. 2017.
\newblock Reactive synthesis for finite tasks under resource constraints.
\newblock In \emph{Proc. of IROS}.

\bibitem[{He et~al.(2019)He, Wells, Kavraki, and Vardi}]{he2019efficient}
He, K.; Wells, A.~M.; Kavraki, L.~E.; and Vardi, M.~Y. 2019.
\newblock Efficient symbolic reactive synthesis for finite-horizon tasks.
\newblock In \emph{Proc. of ICRA}.

\bibitem[{Kalagarla, Jain, and Nuzzo(2021)}]{kalagarla2021optimal}
Kalagarla, K.~C.; Jain, R.; and Nuzzo, P. 2021.
\newblock Optimal Control of Discounted-Reward Markov Decision Processes Under
  Linear Temporal Logic Specifications.
\newblock In \emph{2021 American Control Conference (ACC)}, 1268--1274. IEEE.

\bibitem[{Kress-Gazit, Lahijanian, and Raman(2018)}]{kress2018synthesis}
Kress-Gazit, H.; Lahijanian, M.; and Raman, V. 2018.
\newblock Synthesis for robots: Guarantees and feedback for robot behavior.
\newblock \emph{Annual Review of Control, Robotics, and Autonomous Systems}.

\bibitem[{Kupferman and Vardi(1999)}]{kupferman1999model}
Kupferman, O.; and Vardi, M.~Y. 1999.
\newblock Model checking of safety properties.
\newblock In \emph{Proc. of CAV}.

\bibitem[{Kwiatkowska, Parker, and Wiltsche(2017)}]{kwiatkowska2017prism}
Kwiatkowska, M.; Parker, D.; and Wiltsche, C. 2017.
\newblock {PRISM-games}: Verification and Strategy Synthesis for Stochastic
  Multi-player Games with Multiple Objectives.
\newblock \emph{STTT}.

\bibitem[{Lahijanian et~al.(2015)Lahijanian, Almagor, Fried, Kavraki, and
  Vardi}]{lahijanian2015time}
Lahijanian, M.; Almagor, S.; Fried, D.; Kavraki, L.; and Vardi, M. 2015.
\newblock This Time the Robot Settles for a Cost: A Quantitative Approach to
  Temporal Logic Planning with Partial Satisfaction.
\newblock In \emph{Proc. of AAAI}.

\bibitem[{Pnueli(1977)}]{pnueli1977temporal}
Pnueli, A. 1977.
\newblock The temporal logic of programs.
\newblock In \emph{Proc. of FOCS}.

\bibitem[{Pnueli and Rosner(1989)}]{pnueli1989synthesis}
Pnueli, A.; and Rosner, R. 1989.
\newblock On the synthesis of a reactive module.
\newblock In \emph{Proc. of POPL}.

\bibitem[{Shapley(1953)}]{shapley1953stochastic}
Shapley, L.~S. 1953.
\newblock Stochastic games.
\newblock \emph{Proceedings of the National Academy of Sciences of the United
  States of America}, 39(10): 1095.

\bibitem[{Sutton and Barto(2018)}]{sutton2018introduction}
Sutton, R.; and Barto, A. 2018.
\newblock \emph{An Introduction to Reinforcement Learning, Second Edition}.
\newblock MIT press Cambridge.

\bibitem[{Thomas, Wilke et~al.(2002)}]{thomas2002automata}
Thomas, W.; Wilke, T.; et~al. 2002.
\newblock \emph{Automata, logics, and infinite games: A guide to current
  research}.

\bibitem[{Wells et~al.(2021)Wells, Kingston, Lahijanian, Kavraki, and
  Vardi}]{wells2021finite}
Wells, A.~M.; Kingston, Z.; Lahijanian, M.; Kavraki, L.~E.; and Vardi, M.~Y.
  2021.
\newblock Finite-Horizon Synthesis for Probabilistic Manipulation Domains.
\newblock In \emph{IEEE Int. Conf. Robot. Autom}.

\bibitem[{Wen, Ehlers, and Topcu(2015)}]{wen2015correct}
Wen, M.; Ehlers, R.; and Topcu, U. 2015.
\newblock Correct-by-synthesis reinforcement learning with temporal logic
  constraints.
\newblock In \emph{Proc. of IROS}.

\end{thebibliography}

\appendix
\section{Appendix: Complete Proofs}

\subsection{Definition of lower approximation of DS is well-defined}

For an infinite-length weight sequence $W$, let $W[\dots n]$ denote its $n$-length prefix for $n\geq 0$. Given parameters $k$ and $p$ of the discount factor and the approximation factor, respectively, let the {\em resolution} be given by $r = 2^{-(p+k)}$.
 For real number $x \in\Re$, let $\roundL{x}$ denote the largest integer multiple of the resolution that is less than or equal to $x$. Formally, $\roundL{x} = i\cdot 2^{-(p+k)}$ for an integer $i \in \Z$  such that for all integers $j\in \Z$ for which $j\cdot 2^{-(p+k)} \leq x$, we get that $j\leq i$. Then it is clear that for all real values $x \in \Re$, $0\leq x - \roundL{x} < 2^{-(p+k)}$.
Then, the lower approximation of discounted-sum is defined as follows:

\begin{definition}[Lower Approximation of Discounted-Sum]
\label{def:approxDSLow}
{\em Given discount factor  $d = 1+2^{-k}$ and approximation factor $\varepsilon=2^{-p}$ with rational-valued parameters $k,p \in \Q$. The {\em lower gap} of a finite-length weight sequence $U$, denoted $\gapL{U}$ is 0 if $|U| = 0$ and $\roundL{\gapL{V}+v}$ if $U = V\cdot v$.
Then, the {\em lower approximation of discounted sum} of an infinite-length weight sequence $W$ with discount  factor $d$ and approximation factor $\varepsilon$ is denoted by and defined as follows:
\begin{align*}
    \DSumL{W} &= \lim_{n \rightarrow \infty} \frac{\gapL{W[\dots n]}}{d^{n-1}}~\label{Eq:LowerApprox}
\end{align*}
}
\end{definition}

Our \textbf{goal} is to show that $\DSumL{W} = \lim_{n\rightarrow\infty}\frac{\gapL{W[n]}}{d^{n-1}}$ is well-defined, i.e., the limit of $\frac{\gapL{W[\dots n]}}{d^{n-1}}$ exists as $n\rightarrow \infty$ (Theorem~\ref{thrm:LimitExistsLower}). Next, we need to show that Definition~\ref{def:approxDSLow} indeed computes a value that approximates the discounted-sum of a sequence (Theorem~\ref{thrm:ApproxDSLower}).

We begin with some additional notation.
Let $d >1$ be a rational valued discount factor. 
The {\em recoverable gap} of a finite, bounded, weight-sequence $U$ and discount factor $d$, denoted by $\gap{U}$, is 0 if $|U|=0$ and $\gap{V\cdot v} = d\cdot \gap{V} + v$ if $U = V\cdot v$.
Intuitively, the recoverable gap of a finite weight-sequence is the normalized discounted-sum of the finite weight-sequence. Then, it is known that for an infinite-length weight sequence  $W$ $\lim_{n\rightarrow\infty}\frac{\gap{W[\dots n]}}{d^{n-1}} \rightarrow \DSum{W}{d}$.
Then the following holds:

\begin{lemma}
\label{lem:EquivGapAndApproxGapLow}
Let $d = 1+2^{-k}$ and $2^{-p}$ be the discount-factor and precision, for  rational numbers $k,p>0$. Let $\mu>0$ be the upper-bound.
Let $W$ be an infinite and bounded  weight-sequence. Then, there exists an infinite and bounded {\em rational number} weight-sequence $U$ such that for all $n>0$, $\gapL{W[\dots n]} = \gap{U[\dots n]}$.
\end{lemma}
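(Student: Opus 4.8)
The plan is to build $U$ explicitly by absorbing, at each step of the $\gapL{\cdot}$ recursion, the rounding error incurred by $\roundL{\cdot}$ into the next weight. Write $W = w_0w_1w_2\dots$ and set $g_n := \gapL{W[\dots n]}$ for $n\geq 0$, so that $g_0 = 0$ and $g_{n+1} = \roundL{d\cdot g_n + w_n}$. Each $g_n$ is an integer multiple of $2^{-(p+k)}$ (by induction, from the definition of $\roundL{\cdot}$), hence rational, and by the rounding property $0\leq x - \roundL{x} < 2^{-(p+k)}$ the quantity $\delta_n := (d\cdot g_n + w_n) - g_{n+1}$ satisfies $0\leq \delta_n < 2^{-(p+k)}$. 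I then define $U = u_0u_1u_2\dots$ by $u_n := g_{n+1} - d\cdot g_n$, equivalently $u_n = w_n - \delta_n$.

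First I would verify the side conditions on $U$. Rationality of every $u_n$ is immediate since $g_n, g_{n+1}, d\in\Q$. For boundedness, the identity $u_n = w_n - \delta_n$ together with $|w_n|\leq \mu$ and $0\leq \delta_n < 2^{-(p+k)}\leq \tfrac12$ gives $|u_n| < \mu + 1$ uniformly in $n$, so $U$ is an infinite, bounded, rational-valued weight sequence. Then I would prove the main identity $\gap{U[\dots n]} = g_n$ for all $n\geq 0$ (which in particular covers every $n>0$) by induction on $n$: the base case $n=0$ is $0=0$, and for the step the recursive definition of $\gap{\cdot}$ and the induction hypothesis give $\gap{U[\dots n+1]} = d\cdot\gap{U[\dots n]} + u_n = d\cdot g_n + (g_{n+1} - d\cdot g_n) = g_{n+1}$.

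The construction is the only real idea here — choosing $u_n$ to be exactly the increment that makes the un-rounded recurrence $\gap{\cdot}$ track the rounded recurrence $\gapL{\cdot}$ — after which everything is routine bookkeeping. The one place the hypotheses are genuinely used is boundedness of $U$: the gap values $g_n$ themselves are generally unbounded (they scale like $d^{n-1}$), but the per-step rounding error $\delta_n$ is uniformly bounded by $2^{-(p+k)}$, and this is precisely why $|u_n|$ stays bounded. It is also the reason the lemma must permit $U$ to carry rational rather than integer weights, since $u_n = w_n - \delta_n$ is in general not an integer.
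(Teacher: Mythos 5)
Your proposal is correct and is essentially the same construction as the paper's: both define $u_n = w_n - \delta_n$ where $\delta_n$ is the per-step rounding error of $\mathsf{roundLow}$, and both verify $\gap{U[\dots n]} = \gapL{W[\dots n]}$ by induction. Your explicit closed-form definition of $U$ (rather than the paper's step-by-step inductive extension) and your clean separation of the boundedness check ($|u_n| < \mu+1$) are minor stylistic improvements, not a different argument.
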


\begin{proof}
For sake of simplicity, we assume $W$ is an integer weight sequence. The proof extends to rational weight sequences as well. 
Let $W = w_0w_1w_2\dots$ such that for all $i\geq 0$, $w_i \in \Z$ and $|w_i| < \mu$. We will construct the desired infinite-length weight sequence $U$ inductively. 

\paragraph{Base Case.} Consider the 1-length prefix of $W$, $W[\dots1] = (w_0)$. By definition,  $\gapL{W[\dots 1]} = \roundL{(w_0)} = w_0$. So, we set $u_0$, the 0-th element of $U$, to be $w_0$. Clearly, $ \gap{U[\dots 1]} = w_0 = \gapL{W[\dots 1]}$.

\paragraph{Inductive Hypothesis.} For an $n>0$, let there exist an $n$-length rational-number sequence $ (u_0u_1\dots u_{n-1})$ bounded by $\mu$ such that for all $m\leq n$ it holds that $\gapL{W[\dots m]} = \gap{(u_0u_1\dots u_{m-1})}$.

\paragraph{Induction Step.} 
It suffices to prove that the $n$-length weight-sequence $ (u_0u_1\dots u_{n-1})$ 
can be extended by appending a rational-number $u_n$ bounded by $\mu$ such that 
$\gapL{W[\dots (n+1)]} = \gap{(u_0u_1\dots u_{n})}$ holds. 

By definition, $\gapL{W[\dots (n+1)]} =\roundL{d\cdot \gapL{W[\dots n]} + w_n}$.
By definition of $\mathsf{roundLow}$, there exists a $0\leq \varepsilon_n < 2^{-(p+k)}$ such that $\roundL{d\cdot \gapL{W[\dots n]} + w_n} = d\cdot \gapL{W[\dots n]} + w_n -\varepsilon_n$. 
Therefore, we obtain  $\gapL{W[\dots (n+1)]} = d\cdot \gapL{W[\dots n]} + w_n -\varepsilon_n$. By I.H., we see $\gapL{W[\dots (n+1)]} = d\cdot \gap{u_0u_1\dots u_{n-1}} + w_n -\varepsilon_n$. Set $u_n = w_n -\varepsilon_n$. Then,  we obtain that $|u_n|\leq \mu$. Therefore, $\gapL{W[\dots (n+1)]} = \gap{u_0u_1\dots u_n}$.

Therefore, let $U$ be the infinite and bounded rational-number weight-sequence generated as defined above. Then  for all $n>0$, $\gapL{W[\dots n]} = \gap{U[\dots n]}$.

Note that such a $U$ exists for all infinite and bounded-weight sequences $W$, even if $W$ is not an integer weight-sequence. The same proof can be replicated for that case as well. The difference is that for a general rational number weight sequence if $W$ is bounded by $\mu$, then the $U$ will be bounded by $\mu+1$.
\end{proof}

\begin{theorem}
\label{thrm:LimitExistsLower}
Let $d = 1+2^{-k}$ and $2^{-p}$ be the discount-factor and precision, for  rational numbers $k,p>0$. Let $\mu>0$ be the upper-bound.
Let $W$ be an infinite and bounded weight-sequence. 
Then  $\lim_{n\rightarrow\infty}\frac{\gapL{W[\dots n]}}{d^{n-1}}$ exists, where $W[\dots n]$ is the $n$-length prefix of $ W$. 
\end{theorem}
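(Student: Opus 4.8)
The plan is to leverage Lemma~\ref{lem:EquivGapAndApproxGapLow}, which gives us an infinite, bounded, rational-valued weight-sequence $U$ with $\gapL{W[\dots n]} = \gap{U[\dots n]}$ for all $n>0$. Once we have this, the theorem reduces to a known fact: for any infinite, bounded weight-sequence $U$ with discount factor $d>1$, the limit $\lim_{n\to\infty}\frac{\gap{U[\dots n]}}{d^{n-1}}$ exists and equals $\DSum{U}{d}$. This is stated in the excerpt (just before the lemma) as a known property of the recoverable gap. So the core of the argument is: apply the lemma to get $U$, then invoke convergence of the (normalized) recoverable-gap sequence for $U$, and conclude that $\lim_{n\to\infty}\frac{\gapL{W[\dots n]}}{d^{n-1}} = \lim_{n\to\infty}\frac{\gap{U[\dots n]}}{d^{n-1}} = \DSum{U}{d}$, which exists.

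First I would unwind $\frac{\gap{U[\dots n]}}{d^{n-1}}$ directly: since $\gap{U[\dots n]} = \sum_{i=0}^{n-1} d^{n-1-i} u_i$, we get $\frac{\gap{U[\dots n]}}{d^{n-1}} = \sum_{i=0}^{n-1} \frac{u_i}{d^i}$, the $n$-th partial sum of $\DSum{U}{d} = \sum_{i=0}^{\infty}\frac{u_i}{d^i}$. Because $U$ is bounded (by $\mu+1$, per the lemma) and $d>1$, this series is absolutely convergent by comparison with the geometric series $\sum (\mu+1) d^{-i}$, hence the partial sums converge. This shows the limit exists. I would note in passing that this simultaneously identifies the limit value, which is what the subsequent Theorem~\ref{thrm:ApproxDSLower} needs.

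The main obstacle is not in this theorem itself but is already discharged by Lemma~\ref{lem:EquivGapAndApproxGapLow}: the subtlety is that the rounding operation $\mathsf{roundLow}$ in the definition of $\gapL{\cdot}$ introduces, at each step, a perturbation $\varepsilon_n \in [0, 2^{-(p+k)})$, and one must show these accumulated perturbations can be absorbed into a legitimate (still bounded) weight-sequence $U$ rather than causing divergence. The lemma handles this by setting $u_n = w_n - \varepsilon_n$, and the key observation making it work is that $\varepsilon_n < 2^{-(p+k)} \le 1$ so $|u_n|$ stays bounded by $\mu+1$; the inductive structure of $\gapL{\cdot}$ matches that of $\gap{\cdot}$ exactly once this substitution is made. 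So in the write-up I would simply cite the lemma, then do the short geometric-series convergence argument above, and the proof is complete.
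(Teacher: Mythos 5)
Your proposal is correct and follows essentially the same route as the paper's own proof: invoke Lemma~\ref{lem:EquivGapAndApproxGapLow} to replace $\gapL{W[\dots n]}$ with $\gap{U[\dots n]}$ for a bounded rational sequence $U$, then conclude from the convergence of $\frac{\gap{U[\dots n]}}{d^{n-1}}$ to $\DSum{U}{d}$. The only difference is that you spell out the geometric-series justification for that last convergence step, which the paper simply cites as known.
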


\begin{proof}
We know from Lemma~\ref{lem:EquivGapAndApproxGapLow}, that there exists an infinite and bounded {\em rational number} weight-sequence $U$ such that for all $n>0$, $\gapL{W[\dots n]} = \gap{U[\dots n]}$. Therefore, $\frac{\gapL{W[\dots n]}}{d^{n-1}} = \frac{\gap{U[\dots n]}}{d^{n-1}}$. Since $\lim_{n\rightarrow\infty}\frac{\gap{U[\dots n]}}{d^{n-1}}$ exists, we also get that  $\lim_{n\rightarrow\infty}\frac{\gapL{W[\dots n]}}{d^{n-1}}$ exists and it is equal to $\DSum{U}{d}$.
\end{proof}

We have proven that the desired limit exists. Therefore, Definition~\ref{def:approxDSLow} is well-defined.

Next, we prove that Definition~\ref{def:approxDSLow} computes a value that approximates the discounted-sum of a weight sequence. 
In the following, we will define the {\em  resolution sequences} as follows:
An $n$-length resolution sequence is the $n$-length sequence in which all elements are the resolution $r=2^{-(p+k)}$.

\begin{lemma}
\label{lem:LowerGapDifBounded}
Let $d = 1+2^{-k}$ and $2^{-p}$ be the discount factor and approximation factor, for  rational numbers $k,p>0$. Let $\mu>0$ be the upper-bound.
Let $W$ be a non-empty, finite-length, and bounded weight sequence. Then,
\[0\leq \gap{W} - \gapL{W} < \gap{R}\]
where $R$ is $|W|$-length resolution sequence. 
\end{lemma}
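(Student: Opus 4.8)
The plan is to prove the two-sided bound by induction on $n = |W|$, leaning on two elementary facts. The first is the defining property of $\mathsf{roundLow}$ recalled just above the lemma: for every real $x$, $0 \le x - \roundL{x} < r$, where $r = 2^{-(p+k)}$ is the resolution. The second is a composition identity for resolution sequences: if $R$ denotes the $n$-length resolution sequence and $R'$ the $(n-1)$-length one, then $R = R'\cdot r$, so by the recursive definition of the recoverable gap, $\gap{R} = d\cdot\gap{R'} + r$.

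For the base case $n = 1$, I would write $W = (w_0)$, compute $\gap{W} = d\cdot\gap{\emptyset} + w_0 = w_0$ and $\gapL{W} = \roundL{d\cdot\gapL{\emptyset} + w_0} = \roundL{w_0}$, so that $\gap{W} - \gapL{W} = w_0 - \roundL{w_0} \in [0, r)$; since the $1$-length resolution sequence satisfies $\gap{R} = d\cdot\gap{\emptyset} + r = r$, both inequalities hold. (Integrality of $w_0$ is not even needed for this step.)

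For the inductive step I would write $W = V\cdot v$ with $|V| = n - 1 \ge 1$ and abbreviate $y = d\cdot\gapL{V} + v$, so that $\gapL{W} = \roundL{y}$. The heart of the argument is the identity
\[
\gap{W} - \gapL{W} = \big(d\cdot\gap{V} + v\big) - \roundL{y} = d\,\big(\gap{V} - \gapL{V}\big) + \big(y - \roundL{y}\big),
\]
obtained by adding and subtracting $y$. The first summand on the right is $\ge 0$ by the induction hypothesis $\gap{V} \ge \gapL{V}$ together with $d > 0$; the second is $\ge 0$ by the $\mathsf{roundLow}$ property — this gives the lower bound. For the upper bound, the induction hypothesis gives $\gap{V} - \gapL{V} < \gap{R'}$ and the rounding property gives $y - \roundL{y} < r$, so the identity yields $\gap{W} - \gapL{W} < d\cdot\gap{R'} + r = \gap{R}$, closing the induction.

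I do not expect a genuine obstacle here; the only point requiring care is propagating the \emph{strict} upper inequality through the induction — it survives because the rounding term contributes $< r$ strictly, even in the degenerate case $\gap{V} = \gapL{V}$. I would also remark that the upper bound $\mu$ plays no role in this particular lemma beyond guaranteeing the gaps are finite (it is inherited from the ambient comparator setup), and that the same argument goes through verbatim for rational-valued weight sequences.
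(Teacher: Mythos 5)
Your proof is correct and takes essentially the same route as the paper's: induction on $|W|$, using the rounding property $0\le x-\roundL{x}<2^{-(p+k)}$ together with the identity $\gap{R}=d\cdot\gap{R'}+2^{-(p+k)}$ for resolution sequences. Your add-and-subtract decomposition handles the lower bound a bit more directly than the paper, which instead relies on monotonicity of $\mathsf{roundLow}$, but this is only a cosmetic difference.
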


\begin{proof}
The proof proceeds by induction on the length of the weight sequence.

\paragraph{Base Case.}
When $|W|=1$. Let $W = w_0$ where $w_0\in\mathbb{Z}$ and $|w_0|\leq \mu$. Then $\gap{W} = \gapL{W} = w_0$. Then $\gap{W} = W_0$ and $\gapL{W} = \roundL{W_0}$. 
Thus, trivially, 
$0\leq \gap{W} - \gapL{W} < 2^{-(p+k)} = \gap{R}$, where $R$ is the resolution sequence of length $1$.

\paragraph{Inductive Hypothesis.}
For all weight-sequences $W$ of length $n\geq1$,  it is true that $0\leq \gap{W} - \gapL{W} < \gap{R}$, where $R$ is $|W|$-length resolution sequence.

\paragraph{Induction Step.} We extend this result to weight-sequences of length $n+1$. Let $W$ be an $n+1$-length weight-sequence. Let $W = W[\dots n] \cdot w_n$ $w_n\in\Z$ such that $|w_n|< \mu$. 

First, we show that $\gap{W} - \gapL{W}\geq 0$:
\begin{align*} 
& \gap{W} - \gapL{W} \\
= &  d\cdot \gap{W[\dots n]} + w_n \\ & - \roundL{d\cdot\gapL{W[\dots n]} + w_n }\\
& \text{ From the I.H. we get} \\
\geq &  d\cdot \gap{W[\dots n]} + w_n \\
& - \roundL{d\cdot\gap{W[\dots n]} + w_n }
\end{align*} 
Since $\gap{a}-\gapL{a}\geq 0$, we obtain the desired result that $\gap{W} - \gapL{W}\geq 0$.

Next, we show that $\gap{W} - \gapL{W} < \gap{R}$, where $R$ is the $|W|$-length resolution sequence. 
\begin{align*} 
& \gap{W} - \gapL{W} \\
= &  d\cdot \gap{W[\dots n]} + w_n \\
& - \roundL{d\cdot\gapL{W[\dots n]} + w_n }\\
& \text{Since $\gap{a} - \gapL{a}< 2^{-(p+k)}$, we get} \\
< &     d\cdot \gap{W[\dots n]} + w_n \\
&- {(d\cdot\gapL{W[\dots n]} + w_n)} + 2^{-(p+k)} \\
= &   d\cdot \gap{W[\dots n]}  - {d\cdot\gapL{W[\dots n]}} \\
& + 2^{-(p+k)} \\
& \text{ From the I.H. we obtain} \\
< & d\cdot\gap{R'}  + 2^{-(p+k)} \\
& \text{ where } R' \text{ is the } n\text{-length resolution sequence} \\
= & \gap{R} \text{ where } R \text{ is the } (n+1)\text{-length resolution sequence}
\end{align*} 
This concludes our proof. 
\end{proof}

\subsection{Proof of Theorem~\ref{thrm:ApproxDSLower}}

\paragraph{Theorem~\ref{thrm:ApproxDSLower}.}
{\em Let $d = 1+2^{-k}$ be the  discount factor and $\varepsilon=2^{-p}$ be the approximation  factor, for positive rational parameters $p,k>0$. Let $W$ be an infinite-length weight sequence. Then,
 $$ 0\leq \DSum{W}{d} - \DSumL{W} < d\cdot \varepsilon $$
}

\begin{proof}
\sloppy
Let $R_n$ denote the $n$-length resolution sequence, and $R$ be infinite-length resolution sequence. From Lemma~\ref{lem:LowerGapDifBounded}, we know that for all $n>0$, 
\begin{align*}
&0\leq \gap{W[n]} - \gapL{W[n]} \\
& < \gap{R_n}\\
\iff & 0\leq \frac{(\gap{W[n]} - \gapL{W[n]})}{d^{n-1}} \\
& < \frac{\gap{R_n}}{d^{n-1}}\\
\iff &0\leq \frac{(\gap{W[n]}}{d^{n-1}} - \frac{\gapL{W[n]})}{d^{n-1}} \\
& < \DSum{R}{d} \\
&\text{By taking the limit and by further simplification, we get} \\
\iff &0\leq \DSum{W}{d} - \DSumL{W} < d \cdot \varepsilon
\end{align*}
\end{proof}

\subsection{Comparator Automata Construction}
\paragraph{Theorem~\ref{thrm:Comparatorlower}. }
{\em Let $\mu>0$ be and integer upper bound. Let $k,p>0$ be {\em integer} parameters s.t. $d = 1+2^{-k}$ is the  discount factor and $\varepsilon=2^{-p}$ is the approximation parameter. Then, the comparator automata for  lower approximation of discounted sum with discount factor $d = 1+2^{-k}$, approximation factor $\varepsilon=2^{-p}$, upper bound $\mu$, threshold 0 and inequality relation $\mathsf{R} \in \{\leq, \geq \}$ is $\omega$-regular.
}

\begin{proof}
The proof presents the construction of a co-safety automaton for the said comparator, thus proving the comparator is $\omega$-regular.
Recall, the parameters are  integer upper bound $\mu>0$,  discount factor $d = 1+2^{-k}$, and approximation factor $\varepsilon=2^{-p}$ where $k,p>0$ are integer discount factors, and threshold value is 0. We present the construction  for relation $\geq$. The relation $\leq$ follows a similar construction. 

Let $\thresh_l$ be the largest integer such that $\thresh_l\cdot 2^{-(p+k)} \leq -\mu \cdot 2^{k}$. Let $\thresh_u$ be the smallest integer such that
$\thresh_u \cdot 2^{-(p+k)} \geq \mu\cdot 2^{k} + 2^{-p}$.
Construct a deterministic B\"uchi automaton $\A^{\mu, d, \varepsilon}_{\geq 0}= (\State, \Start, \Sigma, \delta, \Final)$ as follows:
\begin{enumerate}
	\item  $\State = \{\thresh_l, \thresh_l+1,\dots,  \thresh_u \}$,
	 $\Start = \{0\}$ and
	 $\Final = \{\thresh_u\} $
	
	\item Alphabet $\Sigma = \{-\mu, -\mu+1,\dots, \mu-1, \mu\}$
	\item \label{Trans:formula} Transition function $\delta:  \State \times \Sigma \rightarrow \State$ s.t. $t = \delta(s,a)$ then:
	\begin{enumerate}
		
		\item~\label{item:inductive} If $s \in S\setminus \{\thresh_u,\thresh_l\}$ and $\roundL{d\cdot s \cdot 2^{-(p+k)} + a} = i \cdot 2^{-(p+k)}$ for $i\in\Z$
		\begin{enumerate}
			\item \label{Trans:IntState} If $\thresh_l \leq i \leq \thresh_u$, then $t = i$
		
			\item \label{Trans:leq} If $i> {\thresh_u}$, then $t = \thresh_u$
			\item \label{Trans:geq} If $i< {\thresh_l}$, then $t = \thresh_l$
		\end{enumerate}
	    \item \label{Trans:SelfLoop} Else, if $s = \thresh_l$ or $s= \thresh_u$, then $t = s$ for all $a \in \Sigma$
	\end{enumerate}
\end{enumerate}

Observe that the automaton is a co-safety automaton as its accepting state is a sink. It consists of $\O(\frac{\mu}{(d-1)^2\cdot \varepsilon})$ states.

We are left with the main proof that $\A^{\mu,d,\varepsilon}$
accepts an infinite weight sequence $W$ iff $\DSumL{W}\geq 0$.
For this, we explain the key ideas behind the construction. A state $s$ is interpreted to have a lower gap value of $s\cdot 2^{-(p+k)}$. Since the automaton is deterministic, every weight sequence, finite- or infinite-length, has a unique run in the automaton. so, Tthe idea is to ensure that for any finite-length weight sequence $A$ if state $s$ is the final state in its run in the automaton, then (a). if $s$ is $\thresh_u$, $\gapL{A} \geq \thresh_u\cdot 2^{-(p+k)}$, (b). if $s$ is $\thresh_l$, $\gapL{A} \leq \thresh_u\cdot 2^{-(p+k)}$, and (c)  $\gapL{A} = s\cdot 2^{-(p+k)}$ otherwise. 

In summary, the critical observation here is that Item~\ref{item:inductive} ensures that the transition function follows the inductive definition of lower gap from Definition~\ref{def:approxDSLow}.
This uses a proof by induction on the length of weight sequence $A$. 
If $|A|=0$, the final state of its run is the initial state 0, i.e., $\gapL{A} = 0$.  Suppose the hypothesis holds for weight-sequences of length $n$, we prove it holds for weight sequences of length $n+1$. Let $A = B\cdot b$ and $A$ be of length $n+1$. Then, suppose the final state in the run of $B$ is $s$. Suppose, $b \in S\setminus \{\thresh_u, \thresh_l\}$. 
Then, by I.H. $\gapL{B} = s\cdot 2^{-(p+k)}$. Let the automaton transition to state $t$ on reading alphabet $b$ from state $s$. Then, from definition of lower gap value, $\gapL{A} = \roundL{d\cdot\gapL{B} + b}$. In other words, $\gapL{A} = \roundL{d\cdot s \cdot 2^{-(p+k)} + b}$. This is precisely the criteria used in the transition function to determine the state $t$ in Eq.~\ref{Trans:formula}. Thus, suppose $\gapL{A} = i \cdot 2^{-(p+k)}$, then (a) if $\thresh_l\leq i \leq \thresh_u$, then $t = i$ and $\gapL{A} = t\cdot 2^{-(p+k)}$, (b) if $i > \thresh_u$ then $t = \thresh_u$ and $\gapL{A} = i\cdot 2^{-(p+k)} > t \cdot 2^{-(p+k)}$, and (c) if $i < \thresh_l$ then $t = \thresh_l$ and $\gapL{A} = i\cdot 2^{-(p+k)} < t \cdot 2^{-(p+k)}$. For the state $\thresh_u$, one can prove that if $\gapL{A} \geq \thresh_u\cdot 2^{-(p+k)}$ then for all $a \in \Sigma$, $\gapL{A\cdot a} \geq \thresh_u\cdot 2^{-(p+k)}$. Conversely, for the state $\thresh_l$, one can prove that if $\gapL{A} \leq \thresh_l\cdot 2^{-(p+k)}$ then for all $a \in \Sigma$, $\gapL{A\cdot a} \leq \thresh_l\cdot 2^{-(p+k)}$. This completes the proof of the claim.

Finally, to prove correctness it is sufficient to show that for all sequences $W$, $\DSumL{W} \geq$ 0 iff there exists a finite prefix $A$ of $W$ such that $\gapL{A} \geq \thresh_u\cdot 2^{-(p+k)}$. This is why state $\thresh_u$ is an accepting sink state. 
\end{proof}

\section{Case Study I: Grid World}

The human-robot interaction from is based off a classic $n\times n$ grid world domain. 
The human and robot correspond to the environment and system player. Initially, the two agents are present at diagonally opposite corners of the grid. Two bananas have been placed on the grid, one at each of the remaining corners. 
There are static obstacles of different configurations on these grids, e.g., placements of aisles (Fig~\ref{fig:fractional_df_req}) and an obstacle block in the center.  Each agent controls its own location and is allowed to move in the cardinal directions only. The agents take turns to change their location. We assume the human makes the first move. 
We say a collision occurs between the robot and an object/agent if the robot is in the same location as the object/agent. 
In this case, a strategy for the robot tells in which location to move to next based on the history of  previous configurations.

\renewcommand{\ban}{\mathsf{reach\_banana}}
\renewcommand{\wall}{\mathsf{collision\_obstacle}}
\renewcommand{\human}{\mathsf{collision\_human}}
The robot's hard (qualitative) constraint is to reach the location of at least one of the bananas without colliding into the static obstacles or the (moving) human. Thus, this constraint combines safety and reachability goals. It can be expressed as an $\ltl$ formula using atomic propositions $\ban$, $\wall$, and $\human$.
Proposition $\ban$ holds on those configurations of the grid in which the robot reaches the location of the banana. Proposition $\wall$ holds on  those configurations in which the robot collides with the wall. Similarly, proposition $\human$ holds on those configurations in which the robot collides with the human. 
Then, the $\ltl$ formula $\varphi$ is
\begin{align*}
    \varphi := & \ltlG (\neg \wall)  
    \wedge  \ltlG (\neg \human) \\ 
    \wedge & \ltlF (\ban)
\end{align*}

\renewcommand{\negr}{\mathsf{negative\_reward}}
\renewcommand{\posr}{\mathsf{positive\_reward}}

The robot's soft constraints are modelled to achieve two behaviors. The first one is to distance itself from the human. This could alternately be represented using temporal logic, however the representation will be cumbersome. 
Quantitative rewards can easily express this behavior. 
Given a negative integer parameter $\negr$, a negative reward is assigned to the  robot if it comes too close to the human. This is modelled using the Manhattan distance between the two agents. Suppose, the locations of the agents are $(x_0,y_0)$ and $(x_1,y_1)$, then given $\negr<0$, the reward received by the robot is  $$\Big\lfloor\frac{\negr}{|x_0-x_1| + |y_0-y_1|}\Big\rfloor$$

The second behavior expressed by soft constraints is to encode promptness to fulfil $\ban$. Temporal logics are good at specifying what should be done (using the $\ltlF$ operator) but, to the best of our knowledge, they cannot nicely specify measures such as promptness. One could attempt using several $\ltlX$ (Next operator) but that puts a hard bound on the number of steps within which the constraint must be satisfied. With quantitative constraints, one can encode promptness more naturally and softly (giving the robot more flexibility in deciding when to accomplish the constraint). In our case, we model promptness with a positive integer parameter $\posr>0$ which the robot receives only when it reaches a location of the banana for the first time. This is necessary since otherwise the robot's strategy could be to remain at the location of a banana, thus flouting the consideration to distance itself from the human. 

These two rewards are additive, i.e., if both the positive and negative rewards are non-zero in a configuration of the grid world, the robot receives the sum of both rewards in that configuration. 
Then, it is reasonable to say that a play accomplishes these two behaviors if the total discounted-sum reward of the robot is greater than or equal to 0, i.e., 0-satisficing plays/strategies are good for the robot. 
Observe that if the discount factor were an integer, then robot would be prompted to pick up the banana too soon. Then in Fig~\ref{fig:fractional_df_req} the robot would pick up the closer banana and would be unable to maintain sufficient distance from the human. With fractional discount factors, the robot recognizes it can plan for a longer term and will opt to reach the farther banana. This will also ensure it maintains distance form the human.  
This is exactly why fractional discount factors are preferred: they allow for planning on a longer term than what conservative integer factors would permit.

Our algorithm offers a method to soundly generate a strategy that is both $\varphi$-satisfying and 0-satisficing for the robot in this scenario.  The input to the algorithm will be a quantitative game $(G,\varphi, 0)$ where 
$G$ is a quantitative graph which formalizes the grid world, assigns its configurations (states) labels from the atomic propositions $\ban$, $\wall$, $\human$, and costs to transitions based on assignments from $\negr$ and $\posr$ as described above. 

The output of the algorithm is either a strategy for the robot which satisfies the $\ltl$ formula $\varphi$ and is 0-satisficing for the robot in the grid, or it is a strategy for the environment which either satisfies $\neg\varphi$ or is $d\cdot\varepsilon$-satisficing for the environment where $d$ and $\varepsilon$ are the discount factor and approximation factor, respectively.

\subsubsection{Empirical Analysis}
In the experiments on grid world, we take  $n= 4, 6,8,10$. We choose values of positive and negative rewards $(\posr, \negr)$ from the set $\{(5,-1), (10, -1), (10, -2), (20, -2), (20, -5)\}$, creating 20 grid world benchmarks.

\subsubsection{Observations and Inferences}

Our experiments demonstrate that our algorithm facilitates the design of provably correct strategies for the robot with respect to given the soft and hard constraints. This way we are able to soundly generate a strategy for the robot, from high-level specifications, which not only satisfies a temporal objective but also take into {\em softer} consideration  social-distancing and promptness. No other known approach is able to accomplish this task soundly.

Our algorithm solves all {\em all but one benchmark} within the timeout. The benchmark our algorithm failed on the largest grid of size $10\times 10$ when $d = 1.125$ $(k=3)$, $\posr = 20$, and $\negr = -2$.
The scalability trends of our algorithm on the grid world with a $2\times 2$ obstacle in the center of the grid on the $10\times 10$ grid with $\negr = -2$ have been summarized in Fig~\ref{fig:scalability}. 
The runtime trends with other grid sizes and negative values are similar. 
This shows that the performance of the algorithm is faithful to the size of the parity game which, in turn, is linear in the size of the comparator automata (Theorem~\ref{thrm:finalalgorithm}).

A thorough analysis of our experiments reveals avenues for improvement of the scalability of our algorithm. The one benchmark for which our algorithm failed to terminate within the timeout, we observed that the number of states in the product was high, the positive reward was high, and the discount factor was low ($10\times 10$ grid with $d = 1.125$,   $\posr = 20$, $\negr=-2$). Each one of these parameters contributes significantly to increasing the size of the comparator  (Theorem~\ref{thrm:Comparatorlower}) and subsequently the parity game (Theorem~\ref{thrm:finalalgorithm}). In this case, we observed that the algorithm ran out of memory on our machine. This suggests to focus on succinct representations of the comparator and the game in future work. 

Another observation has to do with the percentage of time spent in each step of the algorithm. Currently, our algorithm implements an explicit construction of the parity game. We observed that on most benchmarks, the algorithm spent around 70-80\% of its time constructing the parity game and only 20-30\% of the time in solving it. This indicates that another avenue for further scalability is to investigate approaches to solve parity games with decomposed specifications.

\begin{table}[t]
\label{tab:gridall}
\centering
\begin{tabular}{|c|c|c|c|}
\hline
\multicolumn{2}{|c|}{Rewards} &
  \multirow{2}{*}{\begin{tabular}[c]{@{}c@{}}Discount\\ factor\end{tabular}} &
  \multirow{2}{*}{\begin{tabular}{c}Total \\ time(s)\end{tabular}} \\
Positive & Negative & & \\ \hline
  
 \multicolumn{4}{|c|}{\textbf{Grid World} $n = 4$ with 397 states} \\ \hline
\multirow{2}{*}{5}  & \multirow{2}{*}{-1} & 1.25   & 0.015  \\ \cline{3-4} 
                    &                     & 1.125  & 0.049  \\ \hline
\multirow{2}{*}{10} & \multirow{2}{*}{-1} & 1.25   & 0.012  \\ \cline{3-4} 
                    &                     & 1.125 & 0.038  \\ \hline
\multirow{2}{*}{10} & \multirow{2}{*}{-2} & 1.25  & 2.084  \\ \cline{3-4} 
                    &                     & 1.125 & 11.770 \\ \hline
\multirow{2}{*}{20} & \multirow{2}{*}{-2} & 1.25  &  4.067  \\ \cline{3-4} 
                    &                     & 1.125 & 24.503 \\ \hline
\multirow{2}{*}{20} & \multirow{2}{*}{-5} & 1.25  &  4.542  \\ \cline{3-4} 
                    &                     & 1.125 & 25.850 \\ \hline

\multicolumn{4}{|c|}{\textbf{Grid World} $n = 6$ with 2407 states} \\ \hline
\multirow{2}{*}{5} & \multirow{2}{*}{-1} & 1.25 &        0.050     \\ \cline{3-4}
                   &                     & 1.125 &      0.158     \\ \hline
\multirow{2}{*}{10} & \multirow{2}{*}{-1} & 1.25 &     0.050     \\ \cline{3-4}
                   &                     & 1.125 &       0.150     \\ \hline
\multirow{2}{*}{10} & \multirow{2}{*}{-2} & 1.25 &        6.856     \\ \cline{3-4}
                   &                     & 1.125 &         46.199     \\ \hline
\multirow{2}{*}{20} & \multirow{2}{*}{-2} & 1.25 &     13.987     \\ \cline{3-4}
                   &                     & 1.125 &      94.525     \\ \hline
\multirow{2}{*}{20} & \multirow{2}{*}{-5} & 1.25 &     19.424     \\ \cline{3-4}
                   &                     & 1.125 &     106.136     \\ \hline

\multicolumn{4}{|c|}{\textbf{Grid World} $n = 8$ with 8093 states} \\ \hline
 \multirow{2}{*}{5} & \multirow{2}{*}{-1} & 1.25 &   0.159      \\ \cline{3-4}
                   &                     & 1.125  &   0.444    \\ \hline
\multirow{2}{*}{10} & \multirow{2}{*}{-1} & 1.25 &     0.158      \\ \cline{3-4}
                   &                     & 1.125 &       0.419      \\ \hline
\multirow{2}{*}{10} & \multirow{2}{*}{-2} & 1.25 &   19.952     \\ \cline{3-4}
                   &                     & 1.125 &    138.201     \\ \hline
\multirow{2}{*}{20} & \multirow{2}{*}{-2} & 1.25 &     38.293     \\ \cline{3-4}
                   &                     & 1.125 &     279.519     \\ \hline
 \multirow{2}{*}{20} & \multirow{2}{*}{-5} & 1.25 &     56.544     \\ \cline{3-4}
                   &                     & 1.125 &    330.451     \\ \hline

\multicolumn{4}{|c|}{\textbf{Grid World} $n = 10$ with 20572 states} \\ \hline
 \multirow{2}{*}{5} & \multirow{2}{*}{-1} & 1.25 &    0.416     \\ \cline{3-4}
                   &                     & 1.125 &    0.972     \\ \hline
\multirow{2}{*}{10} & \multirow{2}{*}{-1} & 1.25 &      0.413     \\ \cline{3-4}
                   &                     & 1.125 &      0.914     \\ \hline
\multirow{2}{*}{10} & \multirow{2}{*}{-2} & 1.25 &     39.102     \\ \cline{3-4}
                   &                     & 1.125 &  315.064     \\ \hline
\multirow{2}{*}{20} & \multirow{2}{*}{-2} & 1.25 &    78.329     \\ \cline{3-4}
                   &                     & 1.125 &  Timeout     \\ \hline
 \multirow{2}{*}{20} & \multirow{2}{*}{-5} & 1.25 &    122.792     \\ \cline{3-4}
                  &                     & 1.125 &  Timeout     \\ \hline

\end{tabular}
\caption{Analysis of Grid World Domain. Table does not record $d=1.5$ to improve readability of table. All runs with $d=1.5$ terminated within less than $1 \sec$. Timeout = 750$\sec$}
\end{table}

\section{Case Study II: Conveyor Belt}
In our second case study, we consider a significantly more challenging set of scenarios. A robot must operate along a $r\times c$ conveyor belt with $r$ rows and $c$ columns across from a human, see Fig~\ref{fig:3conveyor}. Both agents are restricted to not reach fully across the conveyor belt. When out of reach of the human, the robot can move quickly. Otherwise, it must proceed more slowly. The blocks move down the conveyor belt at a constant speed.

\begin{figure}[t]
\centering
\includegraphics[width=0.3\textwidth]{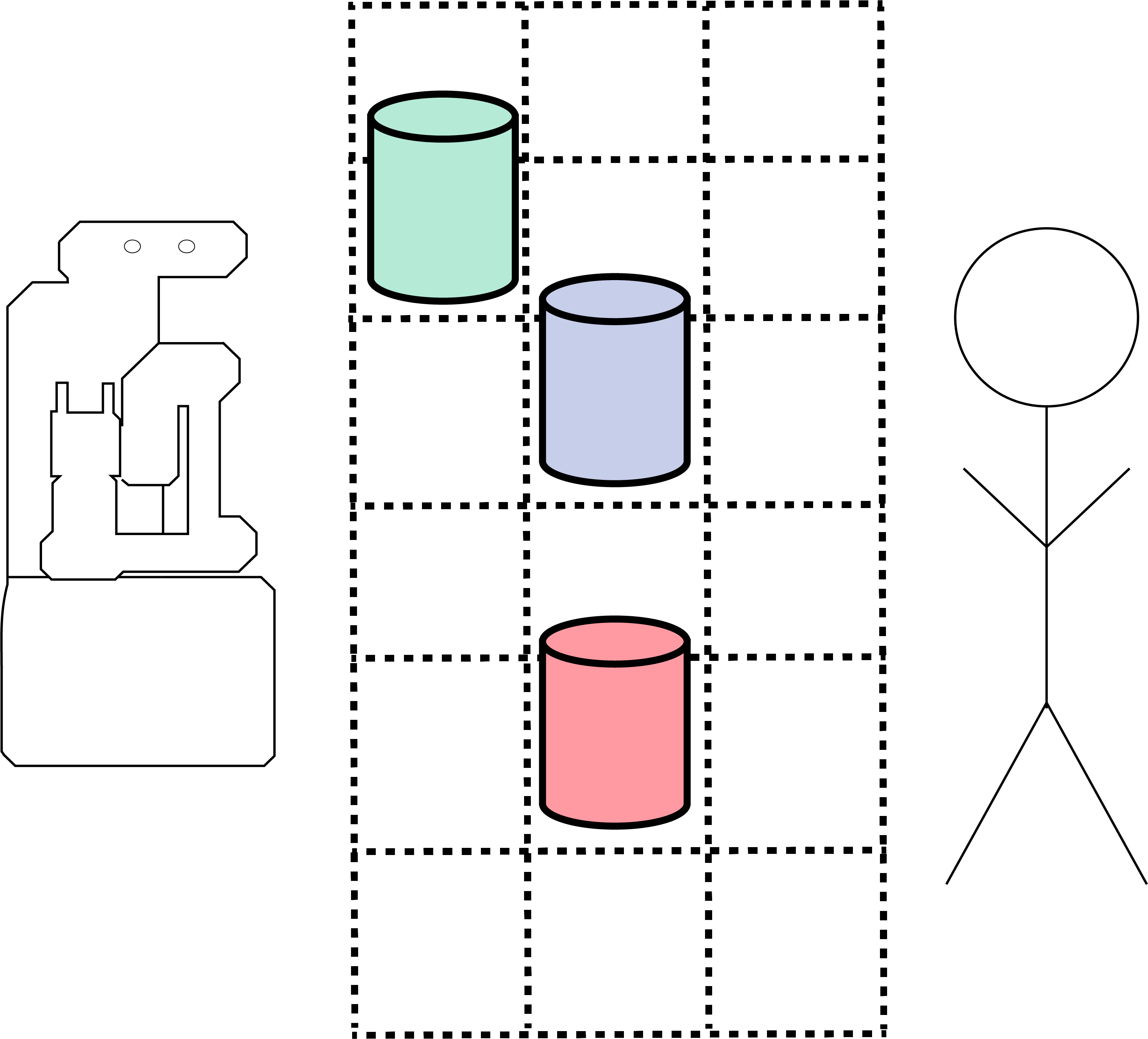}
\caption{Example conveyor belt scenario with three blocks.}
\label{fig:3conveyor} 
\end{figure}

The human controls the location of its arm and the placement of new objects. New blocks of identical type (color) are placed whenever a block is removed from the belt so that a constant number and proportion of types of blocks are maintained on the belt. The human controls the placement of new blocks, except that it must place green blocks near the robot (to ensure the game is winnable).

\paragraph{2 blocks.}
\newcommand{\collision}{\mathsf{collision}}
\newcommand{\blockh}{\mathsf{block\_human}}
\newcommand{\drop}{\mathsf{dropped\_critical}}

In the two block scenario, the robot's $\ltl$ goal is to ensure it doesn't interfere with the human grasping objects. We define proposition $\collision$ as in the previous example. Proposition $\blockh$ holds in a state if the robot and human are adjacent to the human's object and the human simultaneously. Then, the robot's $\ltl$ goal in the 2-block scenario is

\[
    \varphi_2 :=  \ltlG (\neg \collision)  
    \wedge  \ltlG (\neg \blockh)
\]

The robot's soft constraint is designed to encourage it to pick up as many blocks as possible. The robot receives a positive reward for every block it picks up and a negative reward for every block that falls off the belt.

\paragraph{3 blocks.}
In the three block scenario (Fig~\ref{fig:3conveyor}), 
the green blocks are ``critical'' and the robot must grab one. The blue blocks are ``desired'' and the robot should retrieve as many of them as possible. The red blocks are ``the human's'' and the robot should ensure it never blocks the human from reaching them. 
The robot's $\ltl$ goal is to ensure it grasps all green objects and avoids  the human grasping red objects. We define Propositions $\collision$ and $\blockh$ as in the previous example. Proposition $\drop$ holds if a critical object has been dropped prior to or in the current state.

\begin{align*}
    \varphi_3 := & \ltlG (\neg \collision)  
    \wedge  \ltlG (\neg \blockh) \\ 
    \wedge & \ltlG (\neg \drop)
\end{align*}

The robots soft constraint is to maximize the number of blue objects it grasps. Each arm is modeled as grid cells emanating from either side of the conveyor belt. The robot controls the location of its arm. Every desired object retrieved gives positive reward. Every desired object that falls off the end of the belt gives negative reward. If both positive and negative reward are achieved in the same step, the rewards are added.

\subsection{Empirical Evaluation}
In the experiments on conveyor belt, $r\times c = 4\times 3, 5\times 3$ with 2 or 3 blocks. We choose $\negr = -1$. With 2 blocks, we choose $\posr = 2, 3, 4$  and with 1 block $\posr = 5$, creating 7 conveyor belt benchmarks.

\subsubsection{Observations and Inferences}

For the two block scenario, our algorithm solves all but one benchmark. The failure here is a $5\times 3$ conveyor belt  when the positive reward is $4$ and the discount factor is $1.125$ $(k=3)$. 
As earlier, the runtime trends are consistent with the theoretical analysis on size of the parity game and the comparator.

In the solved cases, we see that the algorithm generates a strategy for the robot in all games (which we engineer so that the robot can win). We note that the robot quickly obtains its rewards, suggesting its policy is of high-quality.
Unfortunately, the complexity of the game makes it intractable to hand-compute an optimal policy and compare it to the robot's policy generated by the algorithm. The inability to manually or algorithmically check the correctness of a policy w.r.t. optimality is a reason why one would want sound algorithms like ours to solve complex scenarios like this.  

On the three block scenario, we performed experiments on the $5\times 3$ conveyor belt. Our algorithm terminates on the belts when the discount factor is $d=1.5, 1.25$ $(k=1, k=2)$ but it struggled with discount factor $d=1.125 (k=3)$. As a representative case. Further, none of our experiments terminated at $d = 1.125$. This is not surprising since the product game is large (~60K states) and the discount factor is low. 
Again, we see that future work will require improved scalability. This will open up new applications for robotic synthesis

\begin{table}[t]

\label{table:conveyor_3x4}
\centering
\begin{tabular}{|c|c|c|c|}
\hline
\multicolumn{2}{|c|}{Rewards} &
  \multirow{2}{*}{\begin{tabular}[c]{@{}c@{}}Discount\\ factor\end{tabular}} &
  \multirow{2}{*}{\begin{tabular}{c}Total \\ time(s)\end{tabular}} \\
Positive & Negative & & \\ \hline
\multicolumn{4}{|c|}{\textbf{Conveyor Belt  $r\times c = 4\times3$ with 2 blocks (9966 states)}} \\ \hline
\multirow{2}{*}{2} & \multirow{2}{*}{-1} & 1.25 &     20.121     \\ \cline{3-4}
                   &                     & 1.125 &    102.815     \\ \hline
\multirow{2}{*}{3} & \multirow{2}{*}{-1} & 1.25 &     29.922     \\ \cline{3-4}
                   &                     & 1.125 &    152.274     \\ \hline
\multirow{2}{*}{4} & \multirow{2}{*}{-1} & 1.25 &     40.216     \\ \cline{3-4}
                   &                     & 1.125 &    208.748     \\ \hline
                   
\multicolumn{4}{|c|}{\textbf{Conveyor Belt  $r\times c = 5\times3$ with 2 blocks ( 31547 states)}} \\ \hline
\multirow{2}{*}{2} & \multirow{2}{*}{-1} & 1.25 &     64.782    \\ \cline{3-4}
                   &                     & 1.125 &    332.764     \\ \hline
\multirow{2}{*}{3} & \multirow{2}{*}{-1} & 1.25 &     98.520     \\ \cline{3-4}
                   &                     & 1.125 &    677.558     \\ \hline
\multirow{2}{*}{4} & \multirow{2}{*}{-1} & 1.25 &     127.422     \\ \cline{3-4}
                   &                     & 1.125 &    Timeout     \\ \hline
                   
\multicolumn{4}{|c|}{\textbf{Conveyor Belt $r\times c = 5\times3$ with 3 blocks (  60540 states)}} \\ \hline
\multirow{2}{*}{5} & \multirow{2}{*}{-1} & 1.25 &     712.941     \\ \cline{3-4}
                   &                     & 1.125 &    Timeout     \\ \hline
\end{tabular}
\caption{Analysis of Conveyor Belt domain. Table does not record $d=1.5$ to improve readability of table. All runs with $d=1.5$ terminated within less than $10 \sec$. Timeout = 750$\sec$}
\label{table:conveyorall}
\end{table}

\begin{table}[t]
\caption{10x10 social dist with varied approximation factor}
\label{table:approx_factor}
\centering
\begin{tabular}{|c|c|c|c|c|}
\hline
\multicolumn{2}{|c|}{Rewards} &
  \multirow{2}{*}{\begin{tabular}[c]{@{}c@{}}Disc.\\ factor\end{tabular}} &
  \multirow{2}{*}{\begin{tabular}[c]{@{}c@{}}Approx\\ factor\end{tabular}} &
  \multirow{2}{*}{\begin{tabular}[c]{@{}c@{}}Total\\ time(s)\end{tabular}} \\
Pos & Neg & & & \\ \hline
\multirow{6}{*}{5} & \multirow{6}{*}{-1} &   \multirow{2}{*}{1.5} & 1.25 &      0.385     \\ \cline{4-5}
                   &                     &  &       1.125           &      0.394     \\ \cline{3-5}
                   &                     & \multirow{2}{*}{1.25} & 1.25 &      0.415     \\ \cline{4-5}
                   &                     &  &       1.125           &      0.421     \\ \cline{3-5}
                   &                     & \multirow{2}{*}{1.125} & 1.25 &      0.915     \\ \cline{4-5}
                   &                     &  &       1.125           &      0.917     \\ \hline
\multirow{6}{*}{5} & \multirow{6}{*}{-2} &  \multirow{2}{*}{1.5} & 1.25 &      1.642     \\ \cline{4-5}
                   &                     &  &       1.125           &      6.407     \\ \cline{3-5}
                   &                     & \multirow{2}{*}{1.25} & 1.25 &     39.018     \\ \cline{4-5}
                   &                     &  &        1.125          &     82.824     \\ \cline{3-5}
                   &                     & \multirow{2}{*}{1.125} & 1.25 &    319.686     \\ \cline{4-5}
                   &                     &  &         1.125          & ---     \\ \hline
\multirow{6}{*}{10} & \multirow{6}{*}{-1} & \multirow{2}{*}{1.5} & 1.25 &      0.383     \\ \cline{4-5}
                   &                     &  &         1.125         &      0.443     \\ \cline{3-5}
                   &                     & \multirow{2}{*}{1.25} & 1.25 &      0.418     \\ \cline{4-5}
                   &                     &  &       1.125           &      0.428     \\ \cline{3-5}
                   &                     & \multirow{2}{*}{1.125} & 1.25 &      0.908     \\ \cline{4-5}
                   &                     &   &       1.125           &      0.917     \\ \hline
\multirow{6}{*}{10} & \multirow{6}{*}{-2} &  \multirow{2}{*}{1.5} & 1.25 &      4.001     \\ \cline{4-5}
                   &                     &   &        1.125          &     11.082     \\ \cline{3-5}
                   &                     & \multirow{2}{*}{1.25} & 1.25 &     80.069     \\ \cline{4-5}
                   &                     &  &         1.125         &    160.947     \\ \cline{3-5}
                   &                     & \multirow{2}{*}{1.125} & 1.25 & ---     \\ \cline{4-5}
                   &                     &  &      1.125             & ---     \\ \hline

\end{tabular}
\end{table}

\end{document}